\documentclass{article}

\usepackage{verbatim,amsmath,amsfonts,epsfig,graphics,setspace,amssymb, amsbsy, float,stmaryrd, multirow}
\usepackage{caption}
\usepackage{epsfig}
\usepackage{enumerate,color}
\usepackage{url}

\usepackage{algorithm}
\usepackage{algorithmic}
\usepackage{amsthm}
\usepackage{booktabs}
\usepackage{tikz}
\usetikzlibrary{positioning,arrows.meta}

\usepackage[numbers,sort&compress]{natbib}
\bibpunct{[}{]}{,}{n}{,}{--}

\newcounter{rmrk}[section]

\newtheorem{corollary}{Corollary}[section]
\newtheorem{theorem}{Theorem}[section]

\newcommand{\p}{\mathbb{P}}

\newcommand{\B}{\mathcal{B}}

\newcommand{\pspace}{\Lambda}
\newcommand{\dspace}{\mathcal{D}}

\newcommand{\pborel}{\mathcal{B}_{\pspace}}

\newcommand{\dborel}{\mathcal{B}_{\dspace}}

\newcommand{\initmeas}{P_{\text{init}}}
\newcommand{\initmeasomega}{P_{\text{init}, \omega}}
\newcommand{\updatemeas}{P_{\text{up}}}
\newcommand{\initdens}{\pi_{\text{init}}}
\newcommand{\updatedens}{\pi_{\text{up}}}
\newcommand{\predictmeas}{P_\text{pred}}
\newcommand{\predictdens}{\pi_\text{pred}}
\newcommand{\obsmeas}{P_{\text{obs}}}
\newcommand{\obsdens}{\pi_{\text{obs}}}
\newcommand{\obsmeasi}{P_{\text{obs}, i}}

\newcommand{\expnumber}[2]{{#1}\mathrm{E}{#2}}

\newcommand{\mat}[2][cccccccccccc]{\left(\begin{array}{#1}#2\\ \end{array}\right)}

\DeclareMathOperator*{\argmin}{arg\,min}

\title{Copula Transformations for Data-Consistent Inversion}
\author{Troy Butler\thanks{Department of Mathematical and Statistical Sciences, University of Colorado Denver, Denver, CO 80202 ({\tt Troy.Butler@ucdenver.edu})}
\and
Tianyi Jiang\thanks{Department of Statistics, Colorado State University, Fort Collins, CO 80523}
\and João Silva\thanks{Department of Mathematical and Statistical Sciences, University of Colorado Denver, Denver, CO 80202}
  \and
    Harri Hakula\thanks{Department of Mathematics and Systems Analysis, Aalto University, Finland}
    \and
    Timothy Wildey\thanks{Optimization and Uncertainty Quantification Department, Center for Computing Research, Sandia National Labs, Albuquerque, NM 87185.}
    }
\date{{\normalsize This Draft: \today}}

\begin{document}
\maketitle

\begin{abstract}
Data-consistent inversion (DCI) constructs probability measures whose push-forward distributions agree with observed data, while iterative data-consistent inversion (iDCI) extends this framework to generalized stochastic inverse problems by enforcing multiple push-forward constraints sequentially.
Although iDCI avoids the direct approximation of high-dimensional joint densities, its relationship to the original joint DCI solution has remained unclear.
In this work, we establish this relationship through copula theory.
Using Sklar's theorem, we derive a factorization of the DCI update into separate marginal and dependence transformations and show that the discrepancy remaining after convergence of the iDCI algorithm is entirely characterized by the copulas associated with the observed and predicted joint distributions.
This characterization motivates a copula-transformed iDCI solution, and we prove that an exact copula transformation recovers the original DCI solution.
We further establish convergence results for approximate copula transformations under converging sequences of reference measures and progressively enriched feasible sets.
Numerical examples demonstrate how the geometry induced by the quantity-of-interest map governs the importance of the copula transformation, illustrate an adaptive reference-measure refinement strategy for improving computational accuracy under a fixed sampling budget, and demonstrate the progressive refinement of generalized stochastic inverse problems through heterogeneous, asynchronously acquired experiments.
\end{abstract}

\section{Introduction}

Stochastic inverse problems (SIPs) seek to construct probability measures on model parameters that are consistent with observed data while rigorously quantifying the remaining uncertainty.
More broadly, recent perspectives on inverse problems have emphasized probabilistic and measure-centric formulations in which unknowns or
observations may themselves be represented by probability distributions~\cite{NELSEN2026253}.
Recent work has also considered inverse problems posed directly over spaces of probability measures, including variational formulations for recovering input distributions from prescribed push-forward distributions and analyses of stability and regularization for such
SIPs~\cite{LOW+24, LOW+25}.

Within uncertainty quantification, data-consistent inversion (DCI) has emerged as a measure-theoretic framework for solving SIPs by updating an initial probability measure so that its push-forward through a quantity-of-interest (QoI) map agrees with an observed probability measure \cite{BJW18a}.
The earliest form of the density-based solution to the SIP known to the authors can be found in \cite{PR2000} where it was derived through heuristic arguments based on logarithmic pooling and referred to as ``Bayesian melding.''
Over the last decade, the density-based approximation of the DCI solution to the SIP, as derived in \cite{BJW18a} via the Disintegration Theorem \cite{changandpollard1997}, has been further developed, analyzed, and utilized across a variety of applications, e.g., see~\cite{ZM2023, RPK+23, tran2021solving, BGW2020, FBB25, KE25}.
Subsequent developments have established optimality properties of the DCI solution, clarified its relationship with Bayesian inference, extended the framework to learning low-dimensional QoI maps from high-dimensional data, and demonstrated its effectiveness across a variety of engineering applications; see, for example, \cite{BWY20, PCT+23, BBE24, BH20, MSB+22, RHB25} and the references therein.
More recently, iterative data-consistent inversion (iDCI) generalized the framework to generalized stochastic inverse problems (GSIPs) involving multiple push-forward constraints by replacing a single high-dimensional density estimation problem with a sequence of lower-dimensional DCI updates \cite{JBW+26}.
While this sequential construction substantially reduces the computational burden associated with estimating multivariate output densities, the precise relationship between the limiting iDCI solution and the original joint DCI solution has remained unresolved.

The purpose of this work is to establish this relationship through copula theory via Sklar's theorem \cite{Nelsen2006_copulas}.
Using Sklar's theorem, we derive a factorization of the DCI update into separate marginal and dependence transformations, providing two equivalent interpretations of the DCI solution: one based on direct approximation of the observed and predicted joint densities, and one based on separate approximations of their marginal densities and copulas.
Since iDCI already constructs the marginal transformations through successive applications of DCI to the individual push-forward constraints, the only information missing from the full DCI solution is the dependence associated with the copulas.
This characterization motivates a copula-transformed iDCI (CT+iDCI) solution in which the limiting iDCI measure serves as the initial distribution for a final DCI update involving only the remaining dependence structure.
Building on this perspective, we prove that an exact copula transformation recovers the original DCI solution and develop convergence theory for approximate copula transformations under converging reference measures and progressively enriched feasible sets.

The main contributions of this work are as follows:
\begin{itemize}
    \item a copula-based characterization of the relationship between DCI and iDCI through a factorization of the DCI update into separate marginal and dependence transformations;
    \item an information-theoretic characterization of the remaining discrepancy after iDCI together with a proof that exact copula transformation recovers the original DCI solution;
    \item convergence results for approximate CT+iDCI solutions under converging reference measures and nested feasible sets;
    \item numerical examples showing how QoI geometry governs the importance of the copula transformation, how adaptive reference-measure refinement improves approximation quality under a fixed computational budget, and how progressively enriched GSIPs can incorporate heterogeneous, asynchronously acquired experiments.
\end{itemize}

The remainder of the paper is organized as follows. Section~\ref{sec:DCI_background} reviews the SIP, DCI, and iDCI frameworks and establishes the notation used throughout.
Section~\ref{sec:copulas} introduces the copula decomposition and the notation needed to compare observed and predicted densities.
This section also develops the marginal-copula factorization of the DCI update, shows that exact copula transformation of the limiting iDCI solution recovers the original DCI solution, and identifies the exact target for the approximate methodology.
Section~\ref{sec:iDCI_sequences} analyzes convergence of approximate copula-transformed iDCI solutions under varying reference measures and nested feasible sets.
Section~\ref{sec:comp_approx} summarizes, at a high-level, the computational approximation of DCI and iDCI solutions as well as the copula transformations.
Section~\ref{sec:numerics} presents the numerical examples, and Section~\ref{sec:conclusions} concludes the paper and discusses future work.
Section~\ref{sec:supplementary} describes how to obtain the code and datasets necessary to reproduce all of the results presented in the numerical examples.

\section{Background on Data-Consistent Inversion (DCI)}\label{sec:DCI_background}

Broadly, the aim of the SIP is to construct a probability measure on an input space subject to a push-forward constraint on a specified output space.
The GSIP extends the SIP to multiple push-forward constraints on various output spaces.
In \cite{BJW18a}, a density-based approach to data-consistent inversion (DCI) produces a unique (up to choice of initial distribution) and numerically stable solution to the SIP.
As shown in \cite{JBW+26}, an iterative DCI (iDCI) approach solves the GSIP.
Below, we introduce the necessary notation and terminology to define these problems precisely before we summarize their solutions via the DCI and iDCI frameworks.

\subsection{Notation, Terminology, and Problem Definitions}

We first consider the separable, complete metric spaces $\Lambda\subset\mathbb{R}^p$ and $\{\mathcal{D}_{i}\}_{i \le k}$ for a positive integer $k$ denoting the number of separate push-forward constraints, where, for each $i\le k$, $\mathcal{D}_i\subset\mathbb{R}^{d_i}$ for some positive integer $d_i$.
The space $\Lambda$ serves as the input space, which we often refer to as the {\em parameter space} for a simulation model, and the spaces $\{\mathcal{D}_i\}_{i\le k}$ serve as the output spaces, which we often refer to as the {\em observable spaces} defined in terms of measurable data associated with the outputs of the simulation model.
Let $(\Lambda, \B_{\Lambda})$ and $\{(\mathcal{D}_{i}, \B_{\mathcal{D}_{i}})\}_{i \le k}$ be the corresponding measurable spaces with Borel $\sigma$-algebras $\B_{\Lambda}$ and $\B_{\mathcal{D}_{i}}$ for $i\le k$, respectively.
For each $i$, let $\phi_{i}: \Lambda \rightarrow \mathcal{D}_{i}$ be a measurable mapping.
We often refer to each $\phi_i$ as a {\em quantities of interest (QoI) map}, where the plural {\em quantities} emphasizes that $\phi_i$ may be vector-valued with $d_i>1$.

For a given collection of observed probability measures, denoted by $P_{\text{obs},i}$ for each $i\le k$, defined on each $(\mathcal{D}_i, \B_{\mathcal{D}_i})$, the GSIP is defined as finding a \emph{single} probability measure $P$ satisfying the following $k$ push-forward constraints
\begin{equation}\label{eq:k-data-consistent}
    (P\circ \phi_{i}^{-1})(A) = P_{\text{obs},i}(A), \quad \forall A\in\B_{\mathcal{D}_i}, \quad i =1, \ldots, k.
\end{equation}
The push-forward constraints given by~\eqref{eq:k-data-consistent} are referred to as the {\em data-consistency constraints} since, for a given $A\in\dborel$, the solution must assign the same probabilities to events $\phi_i^{-1}(A)\in\B_{\pspace}$ as observed in the data spaces.
Note that any probability measure $P$ on $(\Lambda, \B_\Lambda)$ induces push-forward probability measures denoted by $P\circ \phi_{i}^{-1}$ on $(\mathcal{D}_i, \B_{\mathcal{D}_i})$ for each $i$.
Subsequently, if each $P_{\text{obs},i}$ is the induced push-forward measure of some probability measure defined on $(\Lambda, \B_\Lambda)$, then a solution to the GSIP is guaranteed to exist.
We make such an assumption in this work and refer to this measure as the {\em data-generating distribution} (denoted as $P_\text{DG}$), given its role in inducing the observed probability measures.

With the GSIP defined above, the SIP can then be simply defined as a GSIP with a single push-forward constraint associated with a fixed (often vector-valued) QoI map.
When a single QoI map is considered, we drop the subscripts and refer to the QoI map defined by $\phi$ and the associated measurable observed space $(\mathcal{D},\mathcal{B}_\mathcal{D})$.
It is worth noting that if $\phi$ is vector-valued so that $\mathcal{D}\subset\mathbb{R}^d$ for $d>1$, then a GSIP can be defined in terms of the component maps of $\phi$ and subspaces of $\mathcal{D}$, which are then denoted as $\phi_i$ and $\mathcal{D}_i$, respectively, for $1\leq i\leq d$.

\subsection{Solutions to the SIP and GSIP}

To help distinguish this methodology from classical Bayesian inference, the DCI terminology, which is utilized in this work, was introduced in \cite{BJW18b}.
In that work, an initial and predicted density (denoted by $\initdens$ and $\predictdens$, resp.) are used to describe the initial quantification of uncertainties on $(\pspace,\pborel)$ and $(\dspace,\dborel)$ (resp.) independent of any observed data.
An observed density, denoted by $\obsdens$, then describes the quantification of uncertainty for the observed output data.
The DCI solution is then obtained via the product of $\initdens$ with the ratio of $\obsdens$ to $\predictdens$ evaluated at $\phi(\lambda)$ for each $\lambda\in\pspace$.
We refer to this solution as the updated density and denote it by $\updatedens$.
The rest of this subsection summarizes, briefly, the precise mathematical content needed to construct the DCI solution to the SIP (Section~\ref{subsec:DCI+SIP}), the optimality of the solution in terms of forward and backward minimization with respect to $f$-divergences (Section~\ref{subsec:DCI+Optimality}), and how this optimality informs the iDCI approach to solving the GSIP (Section~\ref{subsec:iDCI+GSIP}).
For a thorough comparison of DCI and Bayesian frameworks and methods, we direct the interested reader to Sections 2 and 4 of \cite{PCT+23}, Section 7 in \cite{BJW18a}, Sections 1 and 2 of \cite{BWY20}, and to a recent review paper \cite{BBE24}.

\subsubsection{Disintegration and DCI Solution to SIP}\label{subsec:DCI+SIP}

Given a probability measure $P^*$ on $(\Lambda,\B_{\Lambda})$ and a measurable mapping $\phi: \Lambda \rightarrow \mathcal{D}$, the disintegration theorem \cite{changandpollard1997} allows us to disintegrate $P^*$ as
\begin{equation} \label{disintegration}
    P^*(A) = \int_{\mathcal{D}}P^*_{\omega}(A) dQ^{*}(\omega), \ \forall \, A\in B_\Lambda,
\end{equation}
where $Q^{*}:= P^{*}\circ\phi^{-1}$ is the push-forward measure of $P^*$ defined on $(\dspace, \B_\dspace)$.
Here, the $\{P^*_\omega\}_{\omega \in \mathcal{D}}$ represent the {\em disintegration} of $P^*$ into a $Q^*$-a.e.~uniquely defined family of probability measures representing conditional probability measures defined on $\phi^{-1}(\omega)$ for a.e.~$\omega\in\mathcal{D}$, i.e., $P^*_\omega(A) = P^*_\omega(A\cap \phi^{-1}(\omega))$.
It follows that if $\initmeas$ denotes a probability measure on $(\Lambda,\B_\Lambda)$ and $\predictmeas:=\initmeas\circ \phi^{-1}$ denotes its pushforward, then the disintegration of $\initmeas$ yields
\begin{equation}\label{eq:disintegrate_initial}
    \initmeas(A) = \int_\dspace \initmeasomega(A)\, d\predictmeas(\omega), \ \forall \, A\in B_\pspace.
\end{equation}
It is shown in \cite{BET+14} that for a given $P_\text{obs}$ on $(\mathcal{D},\B_\mathcal{D})$ and $P_\text{init}$ on $(\Lambda,\B_\Lambda)$ (referred to as an ansatz in that work), the DCI solution is given by an updated measure defined by
\begin{equation} \label{integration}
    P_\text{up}(\cdot) := \int_{\mathcal{D}} P_{\text{init}, \omega}(\cdot) dP_\text{obs}(\omega).
\end{equation}

We further assume that $\initmeas, \predictmeas,$ and $\obsmeas$ are all absolutely continuous with respect to their associated dominating measures (denoted by $\mu_\Lambda$ on $(\Lambda,\B_\Lambda)$ and $\mu_\mathcal{D}$ on $(\mathcal{D}, \B_\mathcal{D})$) so that they admit Radon-Nikodym derivatives denoted by $\initdens$, $\predictdens$, and $\obsdens$, respectively.
Following \cite{BJW18a}, the {\em predictability assumption} states that if there exists a constant $C>0$ such that $\obsdens(\omega)\leq C\predictdens(\omega)$ for a.e.~$\omega$, then the Radon-Nikodym derivative of the updated measure exists and is given by
\begin{equation}\label{eq:updated_density}
    \updatedens(\lambda) = \initdens(\lambda)\frac{\obsdens(\phi(\lambda))}{\predictdens(\phi(\lambda))}.
\end{equation}
Subsequently, the evaluation of $\updatemeas(A)$ for $A\in\B_\Lambda$ can be rewritten either as
\begin{equation}\label{eq:density-based-DCI}
    \updatemeas(A) = \int_\mathcal{D} \int_{A\cap \phi^{-1}(\omega)} \initdens(\lambda)\frac{\obsdens(\phi(\lambda))}{\predictdens(\phi(\lambda))}\, d\mu_{\Lambda,\omega}(\lambda)\,  d\mu_\mathcal{D}(\omega),
\end{equation}
where $\{\mu_{\pspace,\omega}\}_{\omega\in\dspace}$ comes from the disintegration of $\mu_\pspace$, or as
\begin{equation}\label{eq:density-based-DCI-factored}
    \updatemeas(A) = \int_\mathcal{D} \left(\int_{A\cap \phi^{-1}(\omega)} \initdens(\lambda)\, d\mu_{\Lambda,\omega}(\lambda)\right) \frac{\obsdens(\omega)}{\predictdens(\omega)}\,  d\mu_\mathcal{D}(\omega)
\end{equation}
This latter form emphasizes that the DCI solution is conceptually defined as a re-weighting of the initial measure, but only in the directions of $\Lambda$ for which $\phi$ varies.
While~\eqref{eq:updated_density}-\eqref{eq:density-based-DCI-factored} are all presented for Radon-Nikodym derivatives, we generally refer to these as representing the ``density-based'' DCI solution.
Beyond the existence and uniqueness (up to the choice of initial distribution) of the DCI solution, \cite{BJW18a} also proves stability in the total-variation (TV) metric with respect to perturbations in the initial, observed, and predicted distributions.

\subsubsection{Optimality of the DCI Solution}\label{subsec:DCI+Optimality}

Besides generalizing many well-known divergence measures, such as the Kullback-Leibler (KL) divergence and TV metric \cite{KL1951, RenyiKL2014, BKM17, AH21}, $f$-divergences \cite{polyanskiy2022information} are commonly employed in the context of determining optimal parameters or hyper-parameters of density models to quantify the difference between probability measures, e.g., see~\cite{Guntuboyina2011, BKM17}.
Utilizing the notation of this work, the $f$-divergence we are concerned with is defined as:
\[
    D_{f}(P || P_\text{init}) = \int_\pspace f\left(\frac{dP}{dP_\text{init}}(\lambda)\right) dP_\text{init}(\lambda)
\]
where $P \ll P_\text{init}$ and \( f: [0, \infty) \to \mathbb{R} \) is a convex function satisfying \( f(1) = 0 \) and $f(0) = \lim_{t \rightarrow 0^{+}}f(t)$.
When $f(t)=t\log t$, we obtain the KL divergence denoted by $D_{\rm KL}$ in this work.

Let $\p = \{P\ll \initmeas\, : \, P\circ \phi^{-1} = \obsmeas\}$ denote the space of all pullback measures of $\obsmeas$ that are absolutely continuous with respect to $\initmeas$.
Under the predictability assumption, \cite{JBW+26} proved that the DCI solution is the unique minimizer of the following forward and backward minimization problems:
\[
    \updatemeas = \argmin_{P\in \p} D_{f}(P || \initmeas) \quad \text{Forward minimization},
\]
and, if $\initmeas \ll \updatemeas$, then
\[
    \updatemeas = \argmin_{P\in \tilde{\p}} D_{\rm KL}(\initmeas || P) \quad \text{Backward minimization}
\]
where $\tilde{\p}$ is the subspace of $\p$ such that $\initmeas \ll P$ for all $P\in\tilde{\p}$.

\subsubsection{The iDCI Solution to the GSIP}\label{subsec:iDCI+GSIP}

Following the notation of~\cite{JBW+26}, for each $1\leq i\leq k$, let $\p_{i}$ denote the set of probability measures that satisfy the $i$th push-forward constraint, i.e., $\p_{i} := \{P\ll \initmeas\, :\,  P\circ\phi_{i}^{-1} = P_{\text{obs},i}\}$.
In the iDCI approach, we denote $\p_{mk+1}=\p_1, \p_{mk+2}=\p_2, \ldots$, where $m=0, 1, \ldots$.
It follows that $\p_n=\p_i$ if $(n-i) \mathrm{~mod~} k =0$ (i.e., if there exists nonnegative integer $m$ such that $n=mk+i$ for $1\leq i\leq k$).
The solution set of the GSIP is then expressed as
\begin{equation*}
    \p = \bigcap_{i = 1}^k \p_{i} = \bigcap_{n = 1}^\infty \p_{n}.
\end{equation*}
Since each set $\p_i$ is convex, so is the intersection defining $\p$.
Moreover, under the assumption that the $P_{\text{obs},i}$ are themselves induced by some data-generating distribution, the above intersection is non-empty as it must contain at least $P_\text{DG}$.
The solution to the GSIP is then formulated as the following optimization problem over a convex set:
\begin{align} \label{overall_min}
    \min_{P \in \p}D_f(P|| P_\text{init}).
\end{align}
Clearly, all elements in $\p$ can be viewed as a minimizer for some choice of $P_\text{init}$.
However, we emphasize that $P_\text{init}$ need not belong to $\p$.
This particular probability measure simply represents the initial probability measure previously mentioned and is a place where expert opinion can weigh in on the final structure of the solution.
The only requirement on $\initmeas$ is that $\obsmeasi \ll \initmeas\circ \phi_i^{-1}$ for each $i$.

In \cite{JBW+26}, the iDCI approach is defined by considering one constraint at a time to define a sequence of subproblems that we iterate through to compute a solution.
The $n$th subproblem is defined as the minimization problem
\begin{align} \label{sub_min}
    \min_{P \in \p_{n}} D_f(P ||P^{n-1}),
\end{align}
where $P^{n-1}$ is the minimizer of the previous subproblem.
By the optimality of the DCI solution, the minimum of~\eqref{sub_min} is given by the DCI updated measure denoted by $P^{n}$ where $P^{n-1}$ serves the role of the initial distribution, QoI map $\phi_i$ (where $i=n \mathrm{~mod~} k$) is utilized to construct the associated predicted distribution, and $\obsmeasi$ is the observed measure.
Thus, to construct the iDCI solution for a given initial probability measure $P_\text{init}$, we set $P^0=P_\text{init}$ and iteratively solve the subproblems within the DCI framework to yield a sequence of local solutions, $P^1, P^2, \ldots$.
Under the conditions that (i) there exists at least one $P \in \p$ such that $P \ll P^{0}$, i.e. $\inf_{P \in \p} D_{\rm KL}(P || P^{0}) < \infty$, and (ii) a variant of the predictability assumption holds for each constraint,
then \cite{JBW+26} proves that $P^{n}$ converges in total variation to $P^{\infty} \in \p$ where $P^{\infty}$ is the information projection ($I$-projection) of $P^{0}$ onto $\p$, i.e.,
\[
P^{\infty} := \argmin_{P \in \p} D_{\rm KL}(P || P^{0}).
\]

\section{Copulas and the Relationship between DCI and iDCI}\label{sec:copulas}

In practice, accurately estimating multivariate probability densities is often the most computationally challenging aspect of implementing DCI, particularly as the dimension of the observable space increases.
In contrast, the iDCI solution to the associated GSIP defined entirely in terms of the component maps of the QoI requires approximating  only marginal densities corresponding to individual push-forward constraints.
This naturally raises the question of how the DCI and iDCI solutions are related when the QoI maps for the GSIP are defined entirely in terms of the component maps of a vector-valued QoI.
Sklar's theorem provides precisely the mathematical framework necessary to separate marginal behavior from dependence through a copula decomposition \cite{Nelsen2006_copulas}.

\subsection{Copulas and Sklar's Theorem}\label{subsec:Sklar}

To develop a clear understanding of the relationship between the DCI and iDCI solutions, we restrict attention to the GSIP obtained from the component maps of the vector-valued QoI defining the SIP.
This restriction is made solely to simplify the exposition through the utilization of the classical form of Sklar's theorem.
The component-map formulation considered here represents the most refined decomposition of the observable space and provides the clearest setting in which to expose the relationship between DCI, iDCI, and copula transformations.
Several examples of more general collections of QoI maps are discussed later in Section~\ref{subsec:nested-feasible} in the context of progressively enriching the associated feasible sets.
More generally, the GSIP may be formulated using an arbitrary collection of scalar- or vector-valued QoI maps, including maps associated with overlapping or nested subspaces of the observable space.
In that setting, not only does each of the vector-valued QoI admit its own copula decomposition through the classical form of Sklar's theorem, but the application of a more recently analyzed vector-valued form of Sklar's theorem \cite{FH23_VectorCopulas} immediately extends the analysis presented below to this more general setting.

For notational simplicity, throughout this section, we let
\[
\phi=(\phi_1,\ldots,\phi_d):\Lambda\to D\subset\mathbb R^d
\]
denote the vector-valued QoI map used in DCI and assume its components are used in iDCI.
We assume throughout that the observed and predicted probability measures on $(\dspace,\B_\dspace)$ are absolutely continuous with respect to $\mu_D$ and admit densities $\pi_{\rm obs}$ and $\pi_{\rm pred}$, respectively, and that the predictability assumption is satisfied.
Furthermore, let $\pi_{{\rm obs},i}$ and $\pi_{{\rm pred},i}$ denote the corresponding marginal densities for each component $1\le i\le d$, and let $F_{{\rm obs},i}$ and $F_{{\rm pred},i}$
denote their cumulative distribution functions (CDFs).
The following classical result of Sklar provides the fundamental decomposition that motivates the remainder of this paper.

\begin{theorem}[Sklar's Theorem]
Suppose the CDFs $F_1,\ldots,F_d$
are continuous.
Then there exists a unique copula
$C:[0,1]^d\to[0,1]$ such that the corresponding joint distribution function satisfies
\[
F(y_1,\ldots,y_d)
=
C(F_1(y_1),\ldots,F_d(y_d)).
\]
Moreover, if the joint distribution is absolutely continuous, then the copula admits a density $c$ satisfying
\[
\pi(y)
=
c(F_1(y_1),\ldots,F_d(y_d))
\prod_{i=1}^d
\pi_i(y_i).
\]
\end{theorem}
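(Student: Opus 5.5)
The plan is to prove Sklar's theorem for continuous marginals via the probability integral transform. Let $Y=(Y_1,\ldots,Y_d)$ be a random vector with joint CDF $F$ and continuous marginals $F_1,\ldots,F_d$. Set $U_i:=F_i(Y_i)$ for each $i$.

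\textbf{Step 1: uniform marginals.} Because $F_i$ is continuous, $\mathbb{P}(F_i(Y_i)\le u)=u$ for every $u\in[0,1]$. To see this, let $F_i^{-1}(u):=\inf\{y: F_i(y)\ge u\}$ be the generalized inverse. Continuity gives $F_i(F_i^{-1}(u))=u$. It also gives $\{F_i(Y_i)\le u\}=\{Y_i\le \bar y_u\}$ up to a null set, where $\bar y_u:=\sup\{y:F_i(y)\le u\}$. The set on which this identity can fail is an interval on which $F_i$ is constant, and such an interval carries zero $Y_i$-mass.

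\textbf{Step 2: existence.} Define $C$ to be the joint CDF of $U=(U_1,\ldots,U_d)$. By Step 1, $C$ is a distribution function on $[0,1]^d$ with uniform marginals, so it is a copula. The key identity is $\{F_i(Y_i)\le F_i(y_i)\}=\{Y_i\le y_i\}$ almost surely. This holds because $Y_i\le y_i$ implies $F_i(Y_i)\le F_i(y_i)$. Conversely, the event $\{F_i(Y_i)\le F_i(y_i),\,Y_i>y_i\}$ forces $Y_i$ to lie in a flat piece of $F_i$, which has zero mass. Intersecting these events over $i$ gives $F(y)=C(F_1(y_1),\ldots,F_d(y_d))$.

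\textbf{Step 3: uniqueness.} Continuity and the intermediate value theorem make each $F_i$ surjective onto $(0,1)$. On the boundary of $[0,1]^d$, every copula is forced: it vanishes if some coordinate is $0$, and it reduces to a lower-dimensional copula when coordinates equal $1$. Hence any copula satisfying the Sklar identity is determined on $\mathrm{Ran}F_1\times\cdots\times\mathrm{Ran}F_d\supseteq(0,1)^d$. Copulas are $d$-increasing with uniform margins, hence Lipschitz, so $C$ is pinned down on the closure $[0,1]^d$.

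\textbf{Step 4: density formula.} This is the step I expect to be the main obstacle, and its difficulty is regularity. If $F$ is absolutely continuous with density $\pi$, then each marginal has density $\pi_i$. The plan is a change of variables. For $u$ in the interior, write $C(u)=F(F_1^{-1}(u_1),\ldots,F_d^{-1}(u_d))$. On the set where $\pi_i>0$, each $F_i^{-1}$ is differentiable a.e. with derivative $1/\pi_i(F_i^{-1}(u_i))$. Differentiating mixed-partially then gives
\[
c(u)=\frac{\pi(F_1^{-1}(u_1),\ldots,F_d^{-1}(u_d))}{\prod_{i=1}^d \pi_i(F_i^{-1}(u_i))}.
\]
To avoid pointwise differentiability issues, I would argue at the level of measures instead. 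Define $c$ by the display above, with $c:=0$ on null sets. Then verify that $\int_{[0,u]}c\,dv=C(u)$ by substituting $v_i=F_i(y_i)$, which is legitimate because $F_i$ is absolutely continuous and monotone. This shows $U$ has density $c$. Finally, evaluate at $u_i=F_i(y_i)$ to obtain $\pi(y)=c(F_1(y_1),\ldots,F_d(y_d))\prod_i\pi_i(y_i)$ for $\mu_D$-a.e. $y$, which is the sense in which densities are defined.
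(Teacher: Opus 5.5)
The paper gives no proof of this statement. It quotes Sklar's theorem as a classical result from \cite{Nelsen2006_copulas}, so your argument has to stand on its own. It does: it is the standard probability-integral-transform proof specialised to continuous margins, and Steps~1--4 are correct.

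The citation buys generality, since the general theorem allows arbitrary margins, with uniqueness only on $\mathrm{Ran}F_1\times\cdots\times\mathrm{Ran}F_d$. Your self-contained route instead makes explicit two facts the paper later uses tacitly:
\begin{itemize}
\item Step~2 identifies $C$ as the law of $(F_1(Y_1),\ldots,F_d(Y_d))$. This is exactly the claim that $C_{\rm obs}=P_{\rm obs}\circ U_{\rm obs}^{-1}$ has density $c_{\rm obs}$, which is asserted ``by construction'' in the proof of Theorem~\ref{thm:KL+copula+iDCI}.
\item Step~4 shows that the factorization of $\pi$ holds $\mu_{\mathcal D}$-a.e. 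This is the only sense in which it can hold, and it is all the later density ratios need.
\end{itemize}

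When you write out Step~4, state the pointwise fact your substitution $v_i=F_i(y_i)$ actually uses: $c(F_1(y_1),\ldots,F_d(y_d))\prod_i\pi_i(y_i)=\pi(y)$ for a.e.\ $y$. It holds for two reasons:
\begin{itemize}
\item $F_i^{-1}(F_i(y_i))=y_i$ except on a countable union of flat intervals of $F_i$, and these lie (up to null sets) in $\{\pi_i=0\}$.
\item $\pi=0$ a.e.\ on $\{y:\pi_i(y_i)=0\}$, because this set has probability $\int_{\{\pi_i=0\}}\pi_i=0$.
\end{itemize}
Also note that $c$ does not depend on the chosen versions of $\pi$ and $\pi_i$. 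This is because $u\mapsto(F_1^{-1}(u_1),\ldots,F_d^{-1}(u_d))$ pushes Lebesgue measure on $(0,1)^d$ forward to the absolutely continuous product of the marginals.

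Two cosmetic points. In Step~1 the event identity is exact rather than almost sure, since for $u\in(0,1)$ continuity and monotonicity give $\{y:F_i(y)\le u\}=(-\infty,\bar y_u]$. In Step~3 the boundary discussion is unnecessary: the Lipschitz bound together with density of $(0,1)^d$ in $[0,1]^d$ already forces uniqueness.
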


Applying Sklar's theorem to the observed and predicted distributions separately yields unique copula densities, denoted by $c_{\rm obs}$ and $c_{\rm pred}$, satisfying

\begin{align}
\pi_{\rm obs}(y)
&=
c_{\rm obs}\!\left(
F_{{\rm obs},1}(y_1),
\ldots,
F_{{\rm obs},d}(y_d)
\right)
\prod_{i=1}^d
\pi_{{\rm obs},i}(y_i),
\label{eq:obs-copula-factorization}
\\
\pi_{\rm pred}(y)
&=
c_{\rm pred}\!\left(
F_{{\rm pred},1}(y_1),
\ldots,
F_{{\rm pred},d}(y_d)
\right)
\prod_{i=1}^d
\pi_{{\rm pred},i}(y_i).
\label{eq:pred-copula-factorization}
\end{align}

Equations~\eqref{eq:obs-copula-factorization}--\eqref{eq:pred-copula-factorization} express each joint density as the product of two distinct components.
The marginal densities determine the uncertainty associated with each observable individually, while the copula density contains all information describing the dependence among the observable quantities.
Consequently, two multivariate distributions possess identical dependence structures if and only if they share the same copula.

\subsection{A Marginal--Copula Factorization of the DCI Update}

For notational convenience, define the probability integral transforms associated with the observed and predicted distributions by
\begin{align}
U_{\rm obs}(y)
&:=
\left(
F_{{\rm obs},1}(y_1),
\ldots,
F_{{\rm obs},d}(y_d)
\right),
\label{eq:Uobs}
\\
U_{\rm pred}(y)
&:=
\left(
F_{{\rm pred},1}(y_1),
\ldots,
F_{{\rm pred},d}(y_d)
\right).
\label{eq:Upred}
\end{align}
The following theorem establishes an equivalent factorization of the DCI update into separate marginal and dependence transformations with respect to these copula coordinate systems.

\begin{theorem}[Marginal--Copula Factorization of the DCI Update]
Suppose the assumptions of Section~\ref{subsec:Sklar} hold. Then, the density-based DCI solution admits the representation
\begin{align}
\pi_{\rm up}(\lambda)
&=
\pi_{\rm init}(\lambda)
\left(
\prod_{i=1}^{d}
\frac{\pi_{{\rm obs},i}(\phi_i(\lambda))}
     {\pi_{{\rm pred},i}(\phi_i(\lambda))}
\right)\times
\frac{
c_{\rm obs}(U_{\rm obs}(\phi(\lambda)))
}{
c_{\rm pred}(U_{\rm pred}(\phi(\lambda)))
}.
\label{eq:dci-factorization-updated}
\end{align}

\end{theorem}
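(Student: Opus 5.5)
The proof is a direct substitution of the two Sklar factorizations into the density-based DCI solution~\eqref{eq:updated_density}. We start from $\pi_{\rm up}(\lambda)=\pi_{\rm init}(\lambda)\,\pi_{\rm obs}(\phi(\lambda))/\pi_{\rm pred}(\phi(\lambda))$. This is valid because the predictability assumption is in force by the standing assumptions of Section~\ref{subsec:Sklar}. We then evaluate~\eqref{eq:obs-copula-factorization} and~\eqref{eq:pred-copula-factorization} at $y=\phi(\lambda)$, using $y_i=\phi_i(\lambda)$, and write the arguments of the copula densities compactly as $U_{\rm obs}(\phi(\lambda))$ and $U_{\rm pred}(\phi(\lambda))$ via~\eqref{eq:Uobs}--\eqref{eq:Upred}. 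Forming the quotient then separates into the product of marginal ratios $\prod_i \pi_{{\rm obs},i}(\phi_i(\lambda))/\pi_{{\rm pred},i}(\phi_i(\lambda))$ times the copula ratio $c_{\rm obs}(U_{\rm obs}(\phi(\lambda)))/c_{\rm pred}(U_{\rm pred}(\phi(\lambda)))$. This is exactly~\eqref{eq:dci-factorization-updated}.

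The main obstacle is not the algebra but making sense of the individual ratios. The quotient of products equals the product of quotients only where every factor in the denominator is finite and nonzero. So we must check that the marginal ratios and the copula ratio are well defined wherever $\pi_{\rm up}$ is, i.e., $\mu_\Lambda$-a.e.\ on the support of $\pi_{\rm init}$.

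First, predictability $\pi_{\rm obs}\le C\pi_{\rm pred}$ holds a.e. Integrating out all coordinates but the $i$th gives $\pi_{{\rm obs},i}\le C\pi_{{\rm pred},i}$ a.e. This assumes $\mu_\dspace$ is a product of one-dimensional dominating measures, as is implicit in writing marginal densities. Hence each marginal ratio is a.e.\ finite on the set where $\pi_{{\rm pred},i}>0$.

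Second, on the set $\{\pi_{\rm pred}(\phi(\lambda))>0\}$ the factorization~\eqref{eq:pred-copula-factorization} forces both $c_{\rm pred}(U_{\rm pred}(\phi(\lambda)))>0$ and every $\pi_{{\rm pred},i}(\phi_i(\lambda))>0$. So all denominators are nonzero there.

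Third, the complement $\{\lambda:\pi_{\rm pred}(\phi(\lambda))=0\}$ is $P_{\rm init}$-null, since $\predictmeas(\{\pi_{\rm pred}=0\})=0$. On it we adopt the usual convention that $\pi_{\rm up}=0$ (equivalently $0/0=0$), and we apply the same convention to the factored expression.

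Finally, we note that continuity of the marginal CDFs, which is needed for uniqueness in Sklar's theorem, follows from absolute continuity. Hence $c_{\rm obs}$ and $c_{\rm pred}$ are uniquely determined a.e. The identity therefore holds $\mu_\Lambda$-a.e., which is the appropriate sense of equality for Radon--Nikodym derivatives.
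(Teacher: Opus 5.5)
Your proposal is correct and follows the same route as the paper. Both arguments substitute the Sklar factorizations \eqref{eq:obs-copula-factorization}--\eqref{eq:pred-copula-factorization} into the ratio $\pi_{\rm obs}/\pi_{\rm pred}$, evaluate at $y=\phi(\lambda)$, and separate the products. Your additional bookkeeping on where the individual ratios are finite and nonzero, including the $P_{\rm init}$-null set $\{\lambda:\pi_{\rm pred}(\phi(\lambda))=0\}$, is more careful than the paper's purely formal substitution, but it does not change the argument.
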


\begin{proof}
Substituting the copula factorizations
\eqref{eq:obs-copula-factorization}--\eqref{eq:pred-copula-factorization}
into the ratio $\pi_{\rm obs}(y)/\pi_{\rm pred}(y)$
yields

\[
\frac{\pi_{\rm obs}(y)}
     {\pi_{\rm pred}(y)}
=
\frac{
c_{\rm obs}(U_{\rm obs}(y))
\prod_{i=1}^{d}\pi_{{\rm obs},i}(y_i)
}{
c_{\rm pred}(U_{\rm pred}(y))
\prod_{i=1}^{d}\pi_{{\rm pred},i}(y_i)
}.
\]
Separating the products, evaluating this identity at
$y=\phi(\lambda)$,
and substituting the result into the density-based DCI solution~\eqref{eq:density-based-DCI} yields~\eqref{eq:dci-factorization-updated}.
\end{proof}

Equation~\eqref{eq:dci-factorization-updated} provides two mathematically equivalent factorizations of the DCI update.
The standard formulation evaluates the ratio of the observed and predicted joint densities directly, whereas the factorized representation separates this transformation into a product of marginal density ratios together with a transformation that depends exclusively upon the dependence structures of the observed and predicted distributions.

A subtle but important feature of the factorized representation is that the copula ratio is generally evaluated at two different probability integral transforms.
Specifically, the observed copula is evaluated at $U_{\rm obs}$ while the predicted copula is evaluated at $U_{\rm pred}$.
These mappings coincide if and only if the observed and predicted marginal distributions agree.
Consequently, the DCI update simultaneously transforms both the marginal distributions and the dependence structure of the predicted distribution.
The dependence transformation therefore reflects not only differences between the observed and predicted copulas, but also differences between the coordinate systems on which those copulas are defined.

\subsection{The Copula Discrepancy Remaining After iDCI}

The principal distinction between DCI and iDCI is that the limiting iDCI solution satisfies each marginal push-forward constraint individually.
Consequently, the DCI factorization collapses to a common coordinate system, leaving only a discrepancy in the associated copulas.

\begin{theorem}[Copula representation of the limiting iDCI push-forward]\label{thm:copula+iDCI}
Suppose the assumptions of Section~2.2.3 hold so that the iDCI sequence converges to the limiting measure
$P^\infty$, and let $P^\infty_{\rm pred}:=P^\infty\circ\phi^{-1}$
denote its joint push-forward through the vector-valued QoI map.
Then, the associated density admits the representation
\[
\pi^\infty_{\rm pred}(y)
=
c_\infty(U_{\rm obs}(y))
\prod_{i=1}^{d}
\pi_{{\rm obs},i}(y_i),
\]
for a unique copula density
$c_\infty$.
Equivalently, $U_\infty(y)=U_{\rm obs}(y)$, where $U_\infty$
denotes the probability integral transform associated with
$\pi^\infty_{\rm pred}$.

\end{theorem}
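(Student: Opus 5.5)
The argument has three steps. First we show that the limiting measure $P^\infty$ satisfies every component push-forward constraint. Then we identify the marginals of $P^\infty_{\rm pred}$ with the observed marginals. Finally we apply Sklar's theorem to $P^\infty_{\rm pred}$.

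For the first step, recall from Section~\ref{subsec:iDCI+GSIP} that, under the stated conditions, $P^n\to P^\infty$ in total variation and $P^\infty\in\p=\bigcap_{i=1}^d\p_i$. Membership in $\p_i$ means exactly that $P^\infty\circ\phi_i^{-1}=P_{{\rm obs},i}$ for each $i$. It is worth noting why this is legitimate rather than circular. Each $\p_i$ is closed under total-variation limits, because $|P^n(\phi_i^{-1}(A))-P^\infty(\phi_i^{-1}(A))|\le\|P^n-P^\infty\|_{TV}$. The subsequence $P^{mk+i}\in\p_i$ therefore passes its constraint to the limit. This is the content of the cited convergence result, and we simply invoke it.

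Next we identify the marginals of the joint push-forward. The $i$th marginal of $P^\infty_{\rm pred}=P^\infty\circ\phi^{-1}$ is $P^\infty\circ\phi^{-1}\circ\pi_i^{-1}=P^\infty\circ\phi_i^{-1}$, where $\pi_i$ denotes the coordinate projection on $\dspace$. By the first step this equals $P_{{\rm obs},i}$. Hence the marginal CDFs of $P^\infty_{\rm pred}$ are exactly $F_{{\rm obs},i}$, and these are continuous because $P_{{\rm obs}}$ is absolutely continuous. In particular $U_\infty=U_{\rm obs}$ identically, which is the equivalent formulation in the statement.

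The remaining point is that $P^\infty_{\rm pred}$ admits a density, so that the density version of Sklar's theorem applies. Since $P^\infty\ll P^0=\initmeas$, we have $P^\infty_{\rm pred}\ll\predictmeas\ll\mu_\dspace$. So $\pi^\infty_{\rm pred}$ exists, and its marginal densities are $\pi_{{\rm obs},i}$. Applying Sklar's theorem with continuous marginals $F_{{\rm obs},i}$ gives a unique copula $C_\infty$ with $F^\infty(y)=C_\infty(U_{\rm obs}(y))$. The density statement then gives
\[
\pi^\infty_{\rm pred}(y)=c_\infty(U_{\rm obs}(y))\prod_{i=1}^d\pi_{{\rm obs},i}(y_i).
\]

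The main obstacle is to make sure that the uniqueness and density claims hold in the required sense. The copula is unique on the product of the ranges of the marginals, which is all of $[0,1]^d$ here because the marginals are continuous. The density $c_\infty$ of $C_\infty$ then exists as the mixed partial derivative almost everywhere and is unique Lebesgue-a.e. on $[0,1]^d$. To see this, change variables by $U_{\rm obs}$ on the set where each $\pi_{{\rm obs},i}>0$: there $c_\infty(u)=\pi^\infty_{\rm pred}(y)/\prod_i\pi_{{\rm obs},i}(y_i)$ with $u=U_{\rm obs}(y)$. The complement of this set is $\mu_\dspace$-null under the push-forward, so uniqueness holds up to null sets.
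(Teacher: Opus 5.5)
Your proposal is correct and follows the same route as the paper. $P^\infty$ satisfies each component constraint $P^\infty\circ\phi_i^{-1}=P_{{\rm obs},i}$, so the marginals of $P^\infty_{\rm pred}$ are the observed marginals, $U_\infty=U_{\rm obs}$, and Sklar's theorem gives the representation. Your extra checks make explicit what the paper's proof takes for granted: TV-closedness of each constraint set; $P^\infty_{\rm pred}\ll P_{\rm pred}\ll\mu_{\mathcal D}$ via $P^\infty\ll P^0$, so that the density form of Sklar applies; and the a.e.\ uniqueness of $c_\infty$.
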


\begin{proof}
By construction, the limiting iDCI solution satisfies every marginal push-forward constraint, i.e., $P^\infty\circ\phi_i^{-1}=P_{{\rm obs},i}$ for $i=1,\ldots,d$.
Hence, $\pi^\infty_{{\rm pred},i}=
\pi_{{\rm obs},i}$ for $i=1,\ldots,d$.
It follows that both $F^\infty_i=
F_{{\rm obs},i}$ and $U_\infty(y)=
U_{\rm obs}(y)$ for $i=1,\ldots, d$.

Applying Sklar's theorem to the joint push-forward
$P^\infty_{\rm pred}$
gives
\[
\pi^\infty_{\rm pred}(y)
=
c_\infty(U_\infty(y))
\prod_i
\pi^\infty_{{\rm pred},i}(y_i),
\]
which simplifies to the stated expression after substituting the equalities above.
\end{proof}

\subsection{An Information-Theoretic Characterization of the Remaining Discrepancy}

Theorem~\ref{thm:copula+iDCI} establishes that, after convergence of the iDCI algorithm, the observed and predicted joint distributions possess identical marginal distributions and therefore share a common probability integral transform defined on a common coordinate system.
Consequently, the remaining discrepancy between the two distributions is entirely contained within their copulas. We now show that this observation admits a precise information-theoretic interpretation through the KL divergence.

\begin{theorem}[Copula divergence]\label{thm:KL+copula+iDCI}
Suppose the assumptions of Theorem~\ref{thm:copula+iDCI} hold.
Then,
\begin{equation}\label{eq:iDCI_ratio_copula}
\frac{\pi_{\rm obs}(y)}
     {\pi_{\rm pred}^{\infty}(y)}
=
\frac{
c_{\rm obs}(U_{\rm obs}(y))
}{
c_{\infty}(U_{\rm obs}(y)),
}
\end{equation}
and
\begin{equation}
D_{\rm KL}
\!\left(
\pi_{\rm obs}
\,\middle\|\,
\pi_{\rm pred}^{\infty}
\right)
=
D_{\rm KL}
\!\left(
c_{\rm obs}
\,\middle\|\,
c_{\infty}
\right),
\label{eq:copula-kl}
\end{equation}
where the copula divergence is taken with respect to Lebesgue measure on $[0,1]^d$.
\end{theorem}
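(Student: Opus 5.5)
The plan is to prove the ratio identity \eqref{eq:iDCI_ratio_copula} first and then obtain \eqref{eq:copula-kl} from it by a change of variables through the probability integral transform $U_{\rm obs}$.

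\textbf{Step 1: the ratio identity.} Write $\pi_{\rm obs}$ using the Sklar factorization \eqref{eq:obs-copula-factorization}. Write $\pi^\infty_{\rm pred}$ using Theorem~\ref{thm:copula+iDCI}. Both densities then share the identical marginal product $\prod_{i=1}^d \pi_{{\rm obs},i}(y_i)$, evaluated in the same coordinates $U_{\rm obs}(y)$. Wherever this product is positive, it cancels in the quotient and gives \eqref{eq:iDCI_ratio_copula}. Off that set both densities vanish, so the identity holds $\pi_{\rm obs}$-a.e., which is all the next step requires. This is exactly the specialization of the factorization theorem (the marginal--copula factorization of the DCI update) in which the marginal ratios are identically one and the two probability integral transforms coincide.

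\textbf{Step 2: the KL identity.} By definition,
\[
D_{\rm KL}(\pi_{\rm obs}\|\pi^\infty_{\rm pred})=\int_{\mathcal D}\log\frac{c_{\rm obs}(U_{\rm obs}(y))}{c_\infty(U_{\rm obs}(y))}\,\pi_{\rm obs}(y)\,dy .
\]
The integrand is a function of $u=U_{\rm obs}(y)$ alone. Hence the integral equals $\mathbb E_{Y\sim P_{\rm obs}}[g(U_{\rm obs}(Y))]$ with $g=\log(c_{\rm obs}/c_\infty)$. Because the marginal CDFs are continuous, the push-forward of $P_{\rm obs}$ under $U_{\rm obs}$ is the copula measure with density $c_{\rm obs}$ with respect to Lebesgue measure on $[0,1]^d$. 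Note that $U_{\rm obs}$ is a coordinatewise map, and each coordinate $F_{{\rm obs},i}(Y_i)$ is uniform. The expectation therefore equals $\int_{[0,1]^d} c_{\rm obs}(u)\log\frac{c_{\rm obs}(u)}{c_\infty(u)}\,du=D_{\rm KL}(c_{\rm obs}\|c_\infty)$.

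\textbf{Main obstacle: measure-theoretic care in Step 2.} The delicate points are justifying the push-forward claim and handling absolute continuity and infinite values. For the push-forward, I would verify it on rectangles: $P_{\rm obs}(U_{\rm obs}(Y)\le u)=C_{\rm obs}(u)$ by Sklar's theorem, using continuity and generalized inverses of the $F_{{\rm obs},i}$. For absolute continuity, $P_{\rm obs}\ll P^\infty_{\rm pred}$ if and only if $C_{\rm obs}\ll C_\infty$, since the ratio identity transfers null sets. The identity then also holds when both sides are $+\infty$. Finally, the negative part of the integrand is handled by the standard KL integrability argument, so the change of variables is valid.
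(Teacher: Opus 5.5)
Your proposal is correct and follows the paper's proof step for step: the ratio identity comes from cancelling the common marginal product $\prod_{i=1}^d \pi_{{\rm obs},i}(y_i)$ using Sklar's theorem and Theorem~\ref{thm:copula+iDCI}, and the KL identity comes from changing variables through $C_{\rm obs}=P_{\rm obs}\circ U_{\rm obs}^{-1}$, whose density is $c_{\rm obs}$. Your extra measure-theoretic points are sound refinements of steps the paper handles only by assuming $P_{\rm obs}\ll P^\infty_{\rm pred}$: validity $P_{\rm obs}$-a.e., the equivalence $P_{\rm obs}\ll P^\infty_{\rm pred}\iff C_{\rm obs}\ll C_\infty$ (which covers the case where both sides are infinite), and the integrability of the negative part.
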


Before proving this result, we first note that the KL divergence appearing above is well-defined whenever $P_{\rm obs}
\ll
P_{\rm pred}^{\infty}$,
which is precisely the natural predictability condition associated with using the limiting iDCI push-forward as the initial prediction for a final DCI update.
In particular, this assumption is the direct analogue of the predictability assumption required in the original DCI solution.
Under this assumption, the Radon-Nikodym derivative
\[
\frac{dP_{\rm obs}}
     {dP_{\rm pred}^{\infty}}
\]
exists almost everywhere, allowing the Kullback--Leibler divergence to be evaluated.

\begin{proof}
Equation~\eqref{eq:iDCI_ratio_copula} follows immediately from  Sklar's theorem and Theorem~\ref{thm:copula+iDCI}.
Substituting this identity into the KL divergence yields

\begin{align}
D_{\rm KL}
\!\left(
\pi_{\rm obs}
\,\middle\|\,
\pi_{\rm pred}^{\infty}
\right)
&=
\int_\dspace
\pi_{\rm obs}(y)
\log
\frac{
c_{\rm obs}(U_{\rm obs}(y))
}{
c_{\infty}(U_{\rm obs}(y))
}
\,d\mu_\dspace(y).
\label{eq:KL-step1}
\end{align}

Let $C_{\rm obs}:=
P_{\rm obs}\circ U_{\rm obs}^{-1}
$ denote the copula measure associated with the observed distribution.
By construction, $C_{\rm obs}$ possesses density $c_{\rm obs}$ with respect to Lebesgue measure on $[0,1]^d$.
Therefore, for every integrable function
$f:[0,1]^d\rightarrow\mathbb R$, we are able to change integrals of $f(U_{\rm obs}(y))$ over $\dspace$ to integrals over $[0,1]^d$ as follows,
\[
\int_\dspace
f(U_{\rm obs}(y))
\,dP_{\rm obs}(y)
=
\int_{[0,1]^d}
f(u)
\,dC_{\rm obs}(u)
=
\int_{[0,1]^d}
f(u)
c_{\rm obs}(u)
\,du.
\]
Applying the above identity to
\[
f(u)
=
\log
\frac{c_{\rm obs}(u)}
     {c_{\infty}(u)}
\]
transforms~\eqref{eq:KL-step1}
into
\begin{align*}
D_{\rm KL}
\!\left(
\pi_{\rm obs}
\,\middle\|\,
\pi_{\rm pred}^{\infty}
\right)
&=
\int_{[0,1]^d}
c_{\rm obs}(u)
\log
\frac{
c_{\rm obs}(u)
}{
c_{\infty}(u)
}
\,du \\
&=
D_{\rm KL}
\!\left(
c_{\rm obs}
\,\middle\|\,
c_{\infty}
\right),
\end{align*}
which proves
\eqref{eq:copula-kl}.
\end{proof}

Equation~\eqref{eq:copula-kl} therefore provides rigorous justification for interpreting the remaining update after iDCI as a \emph{copula transformation}. Rather than approximating an entire multivariate density ratio, it suffices to estimate the copula associated with the limiting iDCI push-forward and compare it with the observed copula.

\subsection{Copula-transformed iDCI as a Second DCI Projection}\label{subsec:cc-iDCI}

A natural question is whether a transformation based on the copula discrepancy in the iDCI solution recovers the original DCI solution.
In this subsection, we show that the resulting copula-transformed iDCI solution is itself a DCI solution, but with the limiting iDCI distribution serving as the initial distribution.
Consequently, the original DCI solution and the copula-transformed iDCI solution solve the same constrained optimization problem over the same feasible set, but with different reference measures. As shown below, this observation permits the application of classical projection results for information divergences and ultimately establishes that an exact copula transformation recovers the original DCI solution.

Motivated by Theorem~\ref{thm:KL+copula+iDCI}, we define the \emph{copula-transformed iDCI solution} by applying a single DCI update using $P^\infty$ as the initial measure and the observed joint distribution as the target.
Equivalently, the corresponding copula-transformed density is defined by
\begin{equation}
\pi_{\rm CT}(\lambda)
:=
\pi^\infty(\lambda)
\frac{\pi_{\rm obs}(\phi(\lambda))}
     {\pi_{\rm pred}^\infty(\phi(\lambda))}
=
\pi^\infty(\lambda)
\frac{
c_{\rm obs}(U_{\rm obs}(\phi(\lambda)))
}{
c_\infty(U_{\rm obs}(\phi(\lambda)))
},
\label{eq:copula-transformed}
\end{equation}
where the second equality follows from Theorem~\ref{thm:KL+copula+iDCI}.

\begin{corollary}[Copula-transformed iDCI as a DCI projection]\label{thm:cc_iDCI}

Let $
\tilde{\p}
:=
\left\{
P\ll P^0:
P\circ\phi^{-1}
=
P_{\rm obs}
\right\}
$
denote the feasible set defining the original DCI problem.
Then, the copula-transformed iDCI solution satisfies
\[
P_{\rm CT}
=
\operatorname*{arg\,min}_{P\in \tilde{\p}}
D_f(P\|P^\infty), \quad \text{ and } \quad
P_{\rm CT} = \argmin_{P\in \tilde{\p}} D_{\rm KL}(P^\infty || P)
\]
\end{corollary}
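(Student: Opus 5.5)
The corollary follows by recognizing $P_{\rm CT}$ as an ordinary DCI update and then invoking the optimality results recalled in Section~\ref{subsec:DCI+Optimality}. The initial measure is $P^\infty$, the QoI map is the full vector-valued $\phi$, and the observed measure is $P_{\rm obs}$. By~\eqref{eq:copula-transformed}, the density $\pi_{\rm CT}$ is exactly the DCI updated density~\eqref{eq:updated_density} with $\pi_{\rm init}$ replaced by $\pi^\infty$ and $\pi_{\rm pred}$ replaced by $\pi^\infty_{\rm pred}$. So the forward and backward optimality of the DCI solution, proved in \cite{JBW+26}, gives
\[
P_{\rm CT}=\argmin_{P\ll P^\infty,\;P\circ\phi^{-1}=P_{\rm obs}} D_f(P\|P^\infty)
\]
and the analogous backward KL statement, provided the predictability assumption holds for the pair $(P_{\rm obs},P^\infty_{\rm pred})$. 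This is the condition $P_{\rm obs}\ll P^\infty_{\rm pred}$ with bounded density ratio, which is the standing assumption discussed after Theorem~\ref{thm:KL+copula+iDCI}.

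It then remains to replace the feasible set $\{P\ll P^\infty : P\circ\phi^{-1}=P_{\rm obs}\}$ by $\tilde{\p}$, whose elements are only required to be absolutely continuous with respect to $P^0$. I would do this in two steps.

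\textbf{Step 1: the DCI-relative feasible set lies inside $\tilde{\p}$.} Each iterate $P^n$ has density $\pi^{n-1}$ times a ratio, so $P^n\ll P^{n-1}\ll\cdots\ll P^0$. Total variation convergence then gives $P^\infty\ll P^0$, since a $P^0$-null set is null for every $P^n$ and hence for the limit. Consequently every $P\ll P^\infty$ is also $\ll P^0$.

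\textbf{Step 2: measures in $\tilde{\p}$ not dominated by $P^\infty$ are harmless.} For the forward problem, any $P\in\tilde{\p}$ with $P\not\ll P^\infty$ has $D_f(P\|P^\infty)=+\infty$, or is excluded by the definition of $D_f$, which requires $P\ll P^\infty$. Such measures cannot be minimizers, so the argmin over $\tilde{\p}$ coincides with the argmin over the smaller set.

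For the backward problem the roles reverse: finiteness of $D_{\rm KL}(P^\infty\|P)$ forces $P^\infty\ll P$. I would therefore restrict to those $P$, mirroring the definition of $\tilde{\p}$ in Section~\ref{subsec:DCI+Optimality}. I would also check that $P^\infty\ll P_{\rm CT}$, which holds when $\pi_{\rm obs}>0$ wherever $\pi^\infty_{\rm pred}>0$, and otherwise state this as a hypothesis, as the background result does.

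The main obstacle is Step 2 for the backward minimization, together with justifying $P^\infty\ll P^0$ via total variation convergence; both are about matching domination conditions between the feasible sets. Uniqueness then transfers directly from the uniqueness statement in the DCI optimality theorem.
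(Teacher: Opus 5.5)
Your proposal is correct and follows the paper's route: the paper uses Theorem~\ref{thm:KL+copula+iDCI} to identify \eqref{eq:copula-transformed} as the DCI update with initial measure $P^\infty$, then invokes the forward and backward optimality of DCI. Your Steps 1--2 add domination details that the paper's one-line proof leaves implicit: you show $P^\infty\ll P^0$ so the feasible sets match, and you note that the backward statement needs $P^\infty\ll P_{\rm CT}$, i.e.\ $P^\infty_{\rm pred}\ll P_{\rm obs}$. One caveat: the ``$D_f=+\infty$'' alternative in Step 2 fails when $\lim_{t\to\infty}f(t)/t<\infty$ (e.g.\ total variation), so it is the paper's definition of $D_f$, which requires $P\ll P^\infty$, that does the work there.
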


\begin{proof}
Theorem~\ref{thm:KL+copula+iDCI} shows that the copula-transformed update
\eqref{eq:copula-transformed}
is precisely the density-based DCI solution obtained by taking
$P^\infty$
as the initial probability measure and
$P_{\rm obs}$
as the observed probability measure associated with the vector-valued QoI map.
The optimality of the DCI solution in terms of its forward minimization of $f$-divergences and backward minimization of the KL divergence immediately apply. \end{proof}

We note that the result above follows immediately from the optimality characterization of the DCI solution established in Section~\ref{subsec:DCI+Optimality}.
Moreover, if we let $P_{\rm DCI}$ denote the original DCI solution, then the following inequalities are immediate consequences of the optimality of DCI solutions over $f$-divergences,
\[
D_f(P_{\rm DCI}\|P^0)
\le
D_f(P_{\rm CT}\|P^0),
\quad \text{ and } \quad
D_f(P_{\rm CT}\|P^\infty)
\le
D_f(P_{\rm DCI}\|P^\infty).
\]

The optimality characterization in Corollary~\ref{thm:cc_iDCI} immediately suggests a stronger question: does performing an exact copula transformation after convergence of the iDCI algorithm recover precisely the same probability measure as applying DCI directly to the original SIP? The following theorem answers this question affirmatively by appealing to a classical identity for successive information projections due to Csiszár~\cite{Csiszar1975}.

\begin{theorem}[Copula-transformed iDCI equivalent to the original DCI]\label{thm:cc_i_equals_DCI}$P_{\rm DCI} = P_{\rm CT}$.
\end{theorem}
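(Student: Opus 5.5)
The plan is to compare the two solutions as minimizers of one and the same problem, using the Pythagorean identity for $I$-projections onto linear families due to Csiszár~\cite{Csiszar1975}. Take $f(t)=t\log t$ in Corollary~\ref{thm:cc_iDCI} and in the forward optimality of Section~\ref{subsec:DCI+Optimality}. Then $P_{\rm DCI}$ is the unique minimizer of $D_{\rm KL}(\cdot\|P^0)$ over $\tilde{\p}$, and $P_{\rm CT}$ is the unique minimizer of $D_{\rm KL}(\cdot\|P^\infty)$ over $\tilde{\p}$. It therefore suffices to show that, on $\tilde{\p}$, the two objectives differ by a constant.

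\textbf{Step 1 (nesting of feasible sets).} Let $\p=\bigcap_{i\le d}\p_i$ be the iDCI feasible set built from the component maps. If $P\circ\phi^{-1}=P_{\rm obs}$, then projecting onto the $i$th coordinate gives $P\circ\phi_i^{-1}=P_{{\rm obs},i}$. Hence $\tilde{\p}\subset\p$.

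\textbf{Step 2 (Pythagorean identity).} Each $\p_i$ is a linear family: it is defined by the constraints $\int h(\phi_i)\,dP=\int h\,dP_{{\rm obs},i}$ for bounded measurable $h$. So $\p$ is a convex, linear family, closed in total variation. By Section~\ref{subsec:iDCI+GSIP}, $P^\infty$ is the $I$-projection of $P^0$ onto $\p$, and this projection is finite because $\inf_{P\in\p}D_{\rm KL}(P\|P^0)<\infty$. Csiszár's theorem for linear families (Theorem~3.2 of~\cite{Csiszar1975}) gives equality in the Pythagorean relation:
\[
D_{\rm KL}(P\|P^0)=D_{\rm KL}(P\|P^\infty)+D_{\rm KL}(P^\infty\|P^0),\qquad P\in\p .
\]
Restricting to $P\in\tilde{\p}\subset\p$, the map $P\mapsto D_{\rm KL}(P\|P^0)$ equals $D_{\rm KL}(P\|P^\infty)$ plus the finite constant $D_{\rm KL}(P^\infty\|P^0)$. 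The two problems therefore have the same minimizers, and uniqueness yields $P_{\rm DCI}=P_{\rm CT}$.

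\textbf{Main obstacle.} The delicate point is invoking the \emph{equality} form of the Pythagorean relation rather than the general inequality, which holds for arbitrary convex sets. This requires checking that $\p$ is genuinely a linear family in Csiszár's sense, and that the absolute-continuity requirements hold: $P\ll P^0$ in $\tilde{\p}$, and $P_{\rm obs}\ll P^\infty_{\rm pred}$ as assumed before Theorem~\ref{thm:KL+copula+iDCI}.

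\textbf{Fallback (direct density argument).} If that step is unconvincing, I would argue directly with densities, which also makes the result transparent.
\begin{enumerate}
\item Each iDCI update multiplies the current density by a factor of the form $\pi_{{\rm obs},i}(\phi_i)/\pi^{n-1}_{{\rm pred},i}(\phi_i)$, which is a function of $\phi(\lambda)$. By induction, $dP^n/dP^0=g_n(\phi(\lambda))$.
\item The $\sigma(\phi)$-measurable functions form a closed subspace of $L^1(P^0)$, and $P^n\to P^\infty$ in total variation. Hence $dP^\infty/dP^0=g(\phi(\lambda))$ for some measurable $g$.
\item Pushing forward gives $\pi^\infty_{\rm pred}=g\,\pi_{\rm pred}$. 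Substituting into~\eqref{eq:copula-transformed},
\[
\pi_{\rm CT}=\pi^0\,g(\phi)\,\frac{\pi_{\rm obs}(\phi)}{g(\phi)\,\pi_{\rm pred}(\phi)}=\pi_{\rm up},
\]
which holds $P_{\rm obs}$-a.e.\ on the set where $g>0$.
\item The set where $g=0$ is handled by the predictability condition $P_{\rm obs}\ll P^\infty_{\rm pred}$.
\end{enumerate}
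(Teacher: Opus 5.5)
Your main route has the same skeleton as the paper's proof: $\widetilde{\p}\subset\p$, then a Pythagorean identity on $\widetilde{\p}$, hence identical minimizers. However, the step you flagged does not go through as justified. $\p$ is indeed linear in Csisz\'ar's sense, but linearity alone does not yield \emph{equality} when the family is cut out by infinitely many constraints (here one for each bounded $h$ and each $i$). The projection need not be an algebraic inner point, so only the inequality survives. Here is an example. On $\mathbb{N}$ take $Q$ of full support and $P\ll Q$ with $dP/dQ$ unbounded, vanishing at one point, and $D_{\rm KL}(P\|Q)<\infty$. The segment $E=\{P_t=(1-t)Q+tP:\ t\in[0,1]\}$ is then linear, because no affine extension beyond it stays nonnegative. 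Choose $R$ with $\log(dQ/dR)=h+\mathrm{const}$, where $h$ is bounded and $\int h\,d(P-Q)>0$. Then $D_{\rm KL}(P_t\|R)-D_{\rm KL}(Q\|R)=D_{\rm KL}(P_t\|Q)+t\int h\,d(P-Q)>0$, so $Q$ is the $I$-projection of $R$ onto $E$. Yet $D_{\rm KL}(P\|R)$ exceeds $D_{\rm KL}(P\|Q)+D_{\rm KL}(Q\|R)$ by $\int h\,d(P-Q)$. Equality needs structural information about $dP^\infty/dP^0$. The paper supplies exactly this: it invokes Csisz\'ar's marginal-constraint Case (B) together with the product-of-marginal-factors representation of $dP^\infty/dP^0$ inherited from the iDCI iterates (Step~2 of Theorem~5.1 in \cite{JBW+26}).

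Your fallback, on the other hand, is correct and is a complete proof by a different, more elementary route. The three ingredients all hold: $dP^n/dP^0=\prod_{j\le n}r_j$ with each $r_j$ a function of $\phi_{i_j}(\lambda)$; TV convergence of measures dominated by $P^0$ is $L^1(P^0)$ convergence of their densities; and $L^1(\Lambda,\sigma(\phi),P^0)$ is closed. So $dP^\infty/dP^0=g\circ\phi$, hence $\pi^\infty_{\rm pred}=g\,\pi_{\rm pred}$, and $g$ cancels in the DCI ratio. Your Step~4 also works. Since $P^\infty_{\rm pred}(\{g=0\})=0$, predictability gives $P_{\rm obs}(\{g=0\})=0$. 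Both $P_{\rm DCI}$ and $P_{\rm CT}$ have push-forward $P_{\rm obs}$, so both give $\phi^{-1}(\{g=0\})$ zero mass. In the language of~\eqref{integration}: a $\sigma(\phi)$-measurable reweighting leaves the conditionals $P^0_\omega$ unchanged, so $\int P^\infty_\omega\,dP_{\rm obs}=\int P^0_\omega\,dP_{\rm obs}$.

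This argument needs neither the $I$-projection property of $P^\infty$, nor the finer product structure, nor any integrability of $\log g$. It uses only TV convergence of the iterates and $P_{\rm obs}\ll P^\infty_{\rm pred}$. It also shows more generally that any reference measure of the form $g(\phi)\,dP^0$ satisfying predictability produces the same DCI update, so it covers GSIPs built from overlapping or vector-valued sub-maps of $\phi$. The paper's route instead yields the exact Pythagorean identity on $\widetilde{\p}$ and the picture of CT+iDCI as two nested $I$-projections, on which Section~\ref{sec:iDCI_sequences} builds.

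Incidentally, your fallback's Step~2 also repairs the main route. For $P\in\widetilde{\p}$, $D_{\rm KL}(P\|P^0)=D_{\rm KL}(P\|P^\infty)+\int\log g\,dP_{\rm obs}$ whenever the last integral is finite. A constant offset is all the minimizer comparison needs.
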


\begin{proof}
Since every probability measure satisfying the joint push-forward constraint necessarily satisfies each marginal push-forward constraint, it immediately follows that $
\widetilde{\p}
\subset
\p$
where $\widetilde{\p}
=
\{
P \ll P^0 :
P\circ\phi^{-1}
=
P_{\rm obs}
\}$ denotes the DCI feasible set and $\p$ denotes the iDCI feasible set.
Consequently, the copula-transformed iDCI solution is obtained through two successive $I$-projections where the second projection is taken onto a feasible set contained in the first.
The remainder of this proof relies upon careful application of prior results from \cite{Csiszar1975} and \cite{JBW+26} as described below.

First, substituting the notation of this work into Theorem~2.3 of~\cite{Csiszar1975}, we have that if an analogue to Pythagoras' identity holds (referred to as equation (1.7) in that work), i.e., if $D_{\rm KL}(P \| P^0) = D_{\rm KL} (P \| P^\infty) + D_{\rm KL} (P^\infty \| P^0)$  for all $P\in \widetilde{\p}$, then $P_{\rm DCI}=P_{\rm CT}$.
This analogue to Pythagoras' identity then follows directly from what is referred to as ``Case (B)'' in Section 3 of~\cite{Csiszar1975} where we substitute the generalization of the marginal density representations of Step~2 of the proof of Theorem~5.1 in~\cite{JBW+26} to achieve the required product of marginals representation for the joint distribution in \cite{Csiszar1975}.
The projection identity therefore yields
$
P_{\rm DCI}
=
P_{\rm CT}.
$
\end{proof}

Theorem~\ref{thm:cc_i_equals_DCI} provides the theoretical justification for viewing the copula-transformed iDCI solution as a computational realization of the original DCI solution.
In particular, when the exact copulas associated with the observed distribution and the limiting iDCI push-forward are available, the copula transformation reproduces the DCI solution exactly.

In the first numerical example of Section~\ref{sec:numerics}, we utilize linear maps with Gaussian distributions where the exact DCI solution is simply the DG distribution. Furthermore, since the copula is Gaussian, the copula approximations obtained through parametric estimates produce nearly exact copula transformations, which illustrates this theoretical result.

However, this equivalence between $P_{\rm DCI}$ and $P_{\rm CT}$ relies fundamentally on the exact copula transformation satisfying the original joint push-forward constraint. Approximate copula transformations constructed from numerical estimation generally do not satisfy this constraint exactly and therefore need not remain projections onto the feasible set $\widetilde{\p}$. Consequently, the projection identity of Theorem~\ref{thm:cc_i_equals_DCI} no longer applies directly, motivating the convergence analysis developed in the next section.

\section{Convergence of Approximate iDCI Solutions}\label{sec:iDCI_sequences}

The prior section identifies the copula-transformed iDCI solution as the exact target obtained by applying a second DCI projection to the limiting iDCI solution.
Since the required copula is generally unavailable, we instead replace this projection by an approximate copula transformation.
Iterating this procedure produces a sequence of approximate DCI projections whose convergence follows from the analysis of this section.

It is worth noting, at a high-level, that the factorization of a joint density in terms of its copula and marginal densities is particularly attractive computationally because the dependence component may be approximated independently of the marginal densities. Consequently, simple parametric copula families, such as Gaussian copulas, can be employed to approximate the dependence transformation without restricting the marginal density models.
A standard corollary of Sklar's theorem that is worth noting is that copulas are invariant under monotone transformations,
which further emphasizes that copulas isolate the dependence structure of a multivariate distribution independently of the marginal distributions.
Consequently, one may combine arbitrary marginal models with an independently chosen copula model to construct a joint distribution.
This observation is particularly useful computationally since simple copula families, such as Gaussian copulas, may provide effective approximations of dependence even when the marginals are highly non-Gaussian.

In this work, we exploit the flexibility of parametric copula estimation to approximate copula models for both the observed and predicted distributions and subsequently define approximate copula-transformed iDCI solutions.
However, such an approximation to the copula-transformed iDCI solution may no longer satisfy all the push-forward constraints.
Subsequently, the approximate copula-transformed iDCI solution should be subjected once again to the iDCI algorithm.
Repeating this process produces
a sequence of approximate copula-transformed iDCI solutions.
The question naturally arises as to whether this process converges.
This motivates the general theoretical framework provided below for guaranteeing convergence under a sequence of either converging initial distributions or nested feasible sets.

\subsection{Converging sequences of initial distributions}\label{subsec:initial-sequences}

Recall that the copula-transformed iDCI solution derived in Section~\ref{subsec:cc-iDCI} represents the exact target of the proposed methodology.
The purpose of this section is not to approximate this solution directly, but rather to show that a sequence of approximate copula transformations to iDCI solutions converges to this exact target.
To that end, we first consider a more general problem where we assume there exists a sequence of probability measures $P_{k}^{0}$ on $\pspace$ that approach a limiting reference measure $P_\infty^0$, and seek conditions under which the corresponding sequence of I-projections converges in the sense of
\[
\argmin_{P \in \p}D_{\rm KL}(P || P_{k}^{0})
\overset{TV}{\longrightarrow}
\argmin_{P \in \p}D_{\rm KL}(P || P_\infty^{0})
\]
for some feasible set $\p$.

\begin{theorem}
\label{thm:mtd1_converges}
Let $\p$ be a nonempty convex feasible set of probability measures on
$(\Lambda,\mathcal{B}_{\Lambda})$, and let $\{P_k^0\}_{k\in\mathbb{N}}$
and $P_\infty^0$ be probability measures for which the corresponding
I-projections onto $\p$ exist with finite KL divergence. Define
\[
P_k^\ast
:=
\arg\min_{P\in\p}
D_{\mathrm{KL}}(P\|P_k^0),
\qquad
P_\infty^\ast
:=
\arg\min_{P\in\p}
D_{\mathrm{KL}}(P\|P_\infty^0).
\]
Suppose
\[
\eta_k
:=
\left\|
\log\left(
\frac{dP_k^0}{dP_\infty^0}
\right)
\right\|_{L^\infty(P_\infty^0)}
\longrightarrow 0.
\]
Then
\[
D_{\mathrm{KL}}(P_k^\ast\|P_\infty^\ast)
\leq 2\eta_k,
\]
and, consequently,
\[
d_{\mathrm{TV}}(P_k^\ast,P_\infty^\ast)
\leq \sqrt{\eta_k}
\longrightarrow 0.
\]
\end{theorem}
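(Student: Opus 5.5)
The plan is to compare the two I-projection objectives directly and then apply Csiszár's Pythagorean inequality for I-projections onto convex sets (Theorem~2.2 of \cite{Csiszar1975}). Pinsker's inequality then converts the resulting KL bound into the total-variation bound.

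First I would show that the two objectives differ by at most $\eta_k$ uniformly over the relevant measures. Since $\eta_k<\infty$, the density $dP_k^0/dP_\infty^0$ is bounded above and below by $e^{\pm\eta_k}$ $P_\infty^0$-a.e. Hence $P_k^0$ and $P_\infty^0$ are mutually absolutely continuous, and any $P$ absolutely continuous with respect to one is absolutely continuous with respect to the other. For such $P$ with finite divergence, the chain rule for Radon--Nikodym derivatives gives
\[
D_{\rm KL}(P\|P_k^0)=D_{\rm KL}(P\|P_\infty^0)-\int_\Lambda \log\frac{dP_k^0}{dP_\infty^0}\,dP .
\]
Because $P\ll P_\infty^0$, the integrand is bounded by $\eta_k$ $P$-a.e. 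Therefore
\[
\bigl|D_{\rm KL}(P\|P_k^0)-D_{\rm KL}(P\|P_\infty^0)\bigr|\le\eta_k
\]
for every $P$ in the feasible set, and both divergences are finite simultaneously. This applies in particular to $P_k^\ast$ and $P_\infty^\ast$.

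Next, I would invoke Csiszár's inequality. Since $\p$ is convex and $P_\infty^\ast$ is the I-projection of $P_\infty^0$ onto $\p$ with finite divergence, every $P\in\p$ with $D_{\rm KL}(P\|P_\infty^0)<\infty$ satisfies
\[
D_{\rm KL}(P\|P_\infty^0)\ge D_{\rm KL}(P\|P_\infty^\ast)+D_{\rm KL}(P_\infty^\ast\|P_\infty^0).
\]
I would take $P=P_k^\ast$ and chain the uniform comparison with the minimality of $P_k^\ast$ for the $P_k^0$ problem:
\[
D_{\rm KL}(P_k^\ast\|P_\infty^0)\le D_{\rm KL}(P_k^\ast\|P_k^0)+\eta_k\le D_{\rm KL}(P_\infty^\ast\|P_k^0)+\eta_k\le D_{\rm KL}(P_\infty^\ast\|P_\infty^0)+2\eta_k .
\]
Substituting this into the Pythagorean inequality cancels $D_{\rm KL}(P_\infty^\ast\|P_\infty^0)$, which is finite, and leaves $D_{\rm KL}(P_k^\ast\|P_\infty^\ast)\le 2\eta_k$.

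Finally, Pinsker's inequality gives $d_{\rm TV}\le\sqrt{D_{\rm KL}/2}\le\sqrt{\eta_k}$, which tends to zero. Here $d_{\rm TV}$ is taken as the supremum over events.

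The main obstacle is verifying the hypotheses of Csiszár's inequality rather than the algebra. It requires $\p$ to be convex, the projection to exist with finite divergence, and the comparison point $P_k^\ast$ to have finite divergence relative to $P_\infty^0$. All of these follow from the stated assumptions together with the mutual absolute continuity established in the first step. I would also check that the cancellation is legitimate, meaning no $\infty-\infty$ arises, which holds because every divergence in the chain is finite.
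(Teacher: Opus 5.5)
Your proposal is correct and follows essentially the same route as the paper. Both arguments bound $\bigl|D_{\rm KL}(P\|P_k^0)-D_{\rm KL}(P\|P_\infty^0)\bigr|\le\eta_k$ via mutual absolute continuity, chain this with the optimality of $P_k^\ast$, then apply Csiszár's Pythagorean inequality and Pinsker's inequality. Your explicit checks that $P\ll P_\infty^0$ (so the $P_\infty^0$-a.e.\ bound holds $P$-a.e.) and that no $\infty-\infty$ cancellation occurs are details the paper leaves implicit.
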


The hypothesis of Theorem~\ref{thm:mtd1_converges} provides uniform control of the change in the reference measure on the
log-density scale naturally associated with the KL divergence.
In particular,
the difference between the KL objectives associated with $P_k^0$ and
$P_\infty^0$ can be bounded uniformly over all admissible probability
measures.
The proof combines this observation with the Pythagorean inequality
for I-projections onto convex sets and Pinsker's inequality to obtain the
explicit stability estimate above.

\begin{proof}
Let $F_k(P):=D_{\mathrm{KL}}(P\|P_k^0)$ and $F_\infty(P):=D_{\mathrm{KL}}(P\|P_\infty^0)$.
By the hypothesis, for all sufficiently large $k$,
\[
e^{-\eta_k}
\leq
\frac{dP_k^0}{dP_\infty^0}
\leq
e^{\eta_k}
\qquad
P_\infty^0\text{-a.e.},
\]
so $P_k^0$ and $P_\infty^0$ are mutually absolutely continuous.
Hence, for any probability
measure $P$ for which either $F_k(P)$ or $F_\infty(P)$ is finite,
\[
\begin{aligned}
F_k(P)-F_\infty(P)
&=
\int_\Lambda
\log\left(
\frac{dP}{dP_k^0}
\right)dP
-
\int_\Lambda
\log\left(
\frac{dP}{dP_\infty^0}
\right)dP \\
&=
-\int_\Lambda
\log\left(
\frac{dP_k^0}{dP_\infty^0}
\right)dP.
\end{aligned}
\]
This immediately implies that $\left|F_k(P)-F_\infty(P)\right|\leq \eta_k.$
Combining this with the optimality of $P_k^\ast$ with respect to $F_k$ gives
\begin{align}
F_\infty(P_k^\ast)
&\leq F_k(P_k^\ast)+\eta_k \nonumber \\
&\leq F_k(P_\infty^\ast)+\eta_k \qquad \text{(by optimality of $P_k^\ast$ w.r.t. $F_k$)}\nonumber \\
&\leq F_\infty(P_\infty^\ast)+2\eta_k \label{eq:F_infty_ineq}.
\end{align}

Since $\p$ is convex and $P_\infty^\ast$ is the I-projection of
$P_\infty^0$ onto $\p$, the Pythagorean inequality for the KL divergence
\cite{Csiszar1975} implies, for $P_k^\ast\in\p$,
\[
D_{\mathrm{KL}}(P_k^\ast\|P_\infty^0)
\geq
D_{\mathrm{KL}}(P_k^\ast\|P_\infty^\ast)
+
D_{\mathrm{KL}}(P_\infty^\ast\|P_\infty^0).
\]
Equivalently,
\[
F_\infty(P_k^\ast)
\geq
D_{\mathrm{KL}}(P_k^\ast\|P_\infty^\ast)
+
F_\infty(P_\infty^\ast).
\]
Combining this inequality with~\eqref{eq:F_infty_ineq} yields
\[
D_{\mathrm{KL}}(P_k^\ast\|P_\infty^\ast)
\leq 2\eta_k.
\]
Finally, Pinsker's inequality gives
\[
d_{\mathrm{TV}}(P_k^\ast,P_\infty^\ast)
\leq
\sqrt{
\frac{1}{2}
D_{\mathrm{KL}}(P_k^\ast\|P_\infty^\ast)
}
\leq
\sqrt{\eta_k}.
\]
Since $\eta_k\to0$, it follows that
\[
P_k^\ast
\xrightarrow{\mathrm{TV}}
P_\infty^\ast,
\]
which completes the proof.
\end{proof}

At this point, we find it conceptually useful to introduce operator notation that emphasizes the structure of the proposed computational methodology.
Let
\[
\mathcal T_{\rm iDCI}(P^0)
:=
\arg\min_{P\in\mathcal \p}
D_{\mathrm{KL}}(P\|P^0)
\]
denote the operator induced by the iDCI solution, where $\p$ is the feasible set associated with the GSIP.
With this notation, $P^\infty=\mathcal T_{\rm iDCI}(P^0)$.
Likewise, let
\[
\mathcal T_{\rm CT}(P^\infty)
:=
\arg\min_{P\in\widetilde{\p}}
D_{\mathrm{KL}}(P\|P^\infty),
\]
denote the exact copula transformation operator introduced in Section~\ref{subsec:cc-iDCI}  where $\widetilde{\p}$
denotes the feasible set associated with the SIP defined by the vector-valued QoI map.
With this notation, $P_{\rm CT}=\mathcal{T}_{\rm CT}(P^\infty)$.
Finally, let $\widehat{\mathcal T}_{\rm CT}$
denote the corresponding operator obtained from approximate copula models.

With this notation, the exact copula-transformed iDCI solution is simply
\[
P_{\rm CT}
=
\mathcal T_{\rm CT}
\!\left(
\mathcal T_{\rm iDCI}(P^0)
\right).
\]
This representation emphasizes the successive transformations of an initial measure by the composition of the iDCI procedure followed by the associated (exact) copula transformation, i.e.,
\[
P^0
\;\xrightarrow{\,\mathcal T_{\rm iDCI}\,}\;
P^\infty
\;\xrightarrow{\,\mathcal T_{\rm CT}\,}\;
P_{\rm CT}.
\]
By replacing the exact copula transformation operator, $\mathcal{T}_{\rm CT}$, with its approximation, $\widehat{\mathcal{T}}_{\rm CT}$, we obtain the practical algorithm proposed in this work that generates a sequence of initial probability measures according to
\begin{equation}
P_{k+1}^0
=
\widehat{\mathcal T}_{\rm CT}
\!\left(
\mathcal T_{\rm iDCI}(P_k^0)
\right),
\qquad
k=0,1,2,\ldots,
\label{eq:operator_iteration}
\end{equation}

The operator $\mathcal T_{\rm iDCI}$ first enforces the marginal push-forward constraints to produce a marginally consistent probability measure.
The operator $\widehat{\mathcal T}_{\rm CT}$ then approximately transforms the remaining dependence discrepancy, producing the initial distribution for the next iteration.
With this perspective, equation~\eqref{eq:operator_iteration} emphasizes that the proposed methodology is fundamentally an iteration on the space of initial probability measures rather than on the space of copulas where each iteration involves two successive $I$-projections.

When the sequence generated by
\eqref{eq:operator_iteration}
converges in total variation, we denote its limit by
\[
P^0_\infty
:=
\lim_{k\to\infty}P_k^0,
\]
and refer to $P^0_\infty$ as the \emph{self-consistent probability measure} associated with the iterative copula transformation procedure.
The exact copula-transformed iDCI solution introduced represents the ideal target of this iteration, and Theorem~\ref{thm:mtd1_converges} immediately yields the following consequence.

\begin{corollary}[Approximate copula-transformed iDCI]\label{cor:approx_cc_iDCI}
Let $\{P_k^0\}_{k=0}^\infty$ be the sequence of initial probability
measures generated by~\eqref{eq:operator_iteration}, and suppose that the
associated fixed feasible set $\p$ and sequence $\{P_k^0\}$ satisfy the
hypotheses of Theorem~\ref{thm:mtd1_converges}. Then, $T_{\mathrm{iDCI}}(P_k^0)
\xrightarrow{\mathrm{TV}}
T_{\mathrm{iDCI}}(P_\infty^0)$.
Consequently, if the limiting initial probability measure coincides with
the exact copula-transformed iDCI solution, $P_\infty^0=P_{\mathrm{CT}}$, then $T_{\mathrm{iDCI}}(P_k^0)
\xrightarrow{\mathrm{TV}}
P_{\mathrm{DCI}}$.
\end{corollary}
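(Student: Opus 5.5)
The plan is to treat the corollary as a direct specialization of Theorem~\ref{thm:mtd1_converges}, followed by an identification of the limit through Theorem~\ref{thm:cc_i_equals_DCI}.

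\textbf{Step 1 (first claim).} Take $\p$ to be the fixed iDCI feasible set of the GSIP. It is nonempty and convex, as noted in Section~\ref{subsec:iDCI+GSIP}. Apply Theorem~\ref{thm:mtd1_converges} with reference measures $P_k^0$ and $P_\infty^0$. By definition of the operator, $P_k^\ast=\mathcal T_{\rm iDCI}(P_k^0)$ and $P_\infty^\ast=\mathcal T_{\rm iDCI}(P_\infty^0)$. These I-projections exist with finite KL divergence, and $\eta_k\to 0$; both facts are part of the assumed hypotheses. The theorem then gives
\[
d_{\rm TV}\bigl(\mathcal T_{\rm iDCI}(P_k^0),\mathcal T_{\rm iDCI}(P_\infty^0)\bigr)\le\sqrt{\eta_k}\to 0,
\]
which is the first claim. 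Strictly, TV convergence of $P_k^0$ to $P_\infty^0$ (the definition of $P^0_\infty$) is not needed here, because the $L^\infty$ log-ratio hypothesis is stronger and already implies it.

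\textbf{Step 2 (identifying the limit).} Assume $P_\infty^0=P_{\rm CT}$. By Theorem~\ref{thm:cc_i_equals_DCI}, $P_{\rm CT}=P_{\rm DCI}$, and $P_{\rm DCI}\in\widetilde{\p}\subset\p$. For any $Q\in\p$ the I-projection of $Q$ onto $\p$ is $Q$ itself, since $D_{\rm KL}(Q\|Q)=0$ and the KL divergence is strictly positive off the diagonal. Hence
\[
\mathcal T_{\rm iDCI}(P_\infty^0)=\mathcal T_{\rm iDCI}(P_{\rm DCI})=P_{\rm DCI},
\]
and combining this with Step 1 gives $\mathcal T_{\rm iDCI}(P_k^0)\xrightarrow{\rm TV}P_{\rm DCI}$.

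\textbf{Main obstacle.} The delicate point is the fixed-point identity in Step 2. The I-projection is defined over measures that are absolutely continuous with respect to the original $P^0$, and $\p$ is defined relative to $\initmeas$. I must therefore check that $P_{\rm DCI}$ lies in the version of $\p$ being used. It does, since $P_{\rm DCI}\ll P^0$ and $P_{\rm DCI}$ satisfies every marginal constraint. I must also check that uniqueness of the minimizer holds. It does, because KL divergence is strictly convex, so the zero-divergence point is the unique minimizer. Everything else is bookkeeping of hypotheses already contained in Theorem~\ref{thm:mtd1_converges}.
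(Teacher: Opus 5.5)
Your proposal is correct and follows the paper's proof. You apply Theorem~\ref{thm:mtd1_converges} on the fixed feasible set $\p$ to get the first claim, and then identify the limit $\mathcal T_{\rm iDCI}(P_{\rm CT})=P_{\rm DCI}$ via Theorem~\ref{thm:cc_i_equals_DCI}, with one useful addition: you make explicit the fixed-point step the paper leaves implicit, namely that $P_{\rm CT}\in\widetilde{\p}\subset\p$, so the I-projection returns it unchanged because the KL divergence vanishes only on the diagonal.
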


\begin{proof}
The first statement follows directly from Theorem~\ref{thm:mtd1_converges} by identifying the sequence $\{P_k^0\}$ generated by~\eqref{eq:operator_iteration}
with the sequence of reference measures in that theorem. If $P_\infty^0=P_{\mathrm{CT}}$, then
\[
T_{\mathrm{iDCI}}(P_\infty^0)
=
T_{\mathrm{iDCI}}(P_{\mathrm{CT}})
=
P_{\mathrm{DCI}},
\]
where the final equality follows from Theorem~\ref{thm:cc_i_equals_DCI}.
\end{proof}

\subsection{Nested sequences of feasible sets via progressive information enrichment}\label{subsec:nested-feasible}

Unlike Theorem~\ref{thm:mtd1_converges}, which studies sequences of increasingly accurate reference measures, the next result considers sequences of increasingly restrictive feasible sets.
Such sequences arise naturally in a variety of settings.
Within the context of this work, one may begin with a relatively coarse GSIP involving only a subset of the component maps of a vector-valued QoI and progressively enrich the problem by incorporating additional observable components, thereby introducing additional marginal push-forward constraints.
Likewise, if prior probabilistic information regarding the parameters is available, one may progressively enforce marginal constraints on coordinate projections of the parameter space, resulting in a sequence of increasingly informative admissible sets.
More generally, one may gradually incorporate partial information about the dependence structure itself, for example by successively enforcing constraints associated with pairwise, low-dimensional, or higher-order copulas.
Finally, nested feasible sets arise naturally when employing increasingly expressive families of copula models, such as Gaussian copulas, elliptical copulas, vine copulas, or normalizing-flow-based copulas.
In each of these examples, the underlying optimization problem is solved over a nested sequence of feasible sets whose intersection defines the desired limiting problem.
The following theorem establishes convergence of the corresponding sequence of optimal solutions under general assumptions.

\begin{theorem}\label{thm:mtd2_converges}
Let $\p_{1} \supset \p_{2} \supset \dots$, where each $\p_k$ is a nonempty convex set that is closed in the total variation topology and for which the corresponding I-projection of $P^0$ exists with finite KL divergence.
Let $\p= \bigcap_{j=1}^{\infty}\p_j$ and suppose that the I-projection of $P^0$ onto $\p$ exists with finite KL divergence and that
the sequence $\{\argmin_{P \in \p_{k}}D_{\rm KL}(P \| P^{0})\}$ is relatively compact.
Then,
$
\argmin_{P \in \p_{k}}D_{\rm KL}(P \| P^{0})
\xrightarrow{\mathrm{TV}}
\argmin_{P \in \p}D_{\rm KL}(P \| P^{0})
$.
\end{theorem}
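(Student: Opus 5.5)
Write $P_k^\ast:=\argmin_{P\in\p_k}D_{\rm KL}(P\|P^0)$ and $P^\ast:=\argmin_{P\in\p}D_{\rm KL}(P\|P^0)$, and set $F(P):=D_{\rm KL}(P\|P^0)$. The plan has three parts: a monotonicity argument for the optimal values, a subsequence-plus-uniqueness argument to identify every TV cluster point of $\{P_k^\ast\}$ as $P^\ast$, and a final step upgrading subsequential convergence to convergence of the whole sequence.

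\textbf{Step 1 (monotone values).} The feasible sets are nested, $\p\subset\p_{k+1}\subset\p_k$. Hence the optimal values $F(P_k^\ast)$ are nondecreasing in $k$ and bounded above by $F(P^\ast)<\infty$. So $F(P_k^\ast)\uparrow L$ for some $L\le F(P^\ast)$.

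It is also worth recording a quantitative bound. Since $P_j^\ast\in\p_j\subset\p_k$ for $j\ge k$, Csisz\'ar's Pythagorean inequality for I-projections onto the convex set $\p_k$ (as used in the proof of Theorem~\ref{thm:mtd1_converges}) gives
\[
D_{\rm KL}(P_j^\ast\|P_k^\ast)\le F(P_j^\ast)-F(P_k^\ast)\to0 .
\]
Similarly, $P^\ast\in\p_k$ gives $D_{\rm KL}(P^\ast\|P_k^\ast)\le F(P^\ast)-F(P_k^\ast)$. If I could show $L=F(P^\ast)$ directly, Pinsker's inequality would finish the proof at once. Without that, the sequence is only Cauchy in TV, which is where relative compactness and closedness enter.

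\textbf{Step 2 (identify cluster points).} By relative compactness, every subsequence of $\{P_k^\ast\}$ has a further subsequence $P_{k_m}^\ast$ converging in TV to some $Q$ (in fact Step 1 already gives a Cauchy sequence, so $Q$ is unique). To place $Q$ in $\p$, fix any $j$. For $k_m\ge j$ we have $P_{k_m}^\ast\in\p_j$, and $\p_j$ is TV-closed, so $Q\in\p_j$. Since $j$ was arbitrary, $Q\in\p$.

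Next, TV convergence implies setwise and hence weak convergence. KL divergence is lower semicontinuous in its first argument under this convergence, by the Donsker--Varadhan variational representation with bounded measurable test functions. Therefore
\[
F(Q)\le\liminf_m F(P_{k_m}^\ast)=L\le F(P^\ast).
\]
Because $Q\in\p$ and $P^\ast$ minimizes $F$ over $\p$, this forces $F(Q)=F(P^\ast)$. Uniqueness of the I-projection onto the convex set $\p$, which follows from strict convexity of $F$ on $\{F<\infty\}$, then yields $Q=P^\ast$. As a by-product, $L=F(P^\ast)$.

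\textbf{Step 3 (whole sequence).} Every subsequence of $\{P_k^\ast\}$ has a further subsequence converging in TV to the same limit $P^\ast$. By the standard subsequence principle in the metric space of probability measures with the TV metric, the entire sequence converges: $P_k^\ast\xrightarrow{\rm TV}P^\ast$.

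Alternatively, once $L=F(P^\ast)$ is known, the inequality $D_{\rm KL}(P^\ast\|P_k^\ast)\le F(P^\ast)-F(P_k^\ast)\to0$ from Step 1, combined with Pinsker's inequality, gives the convergence directly.

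\textbf{Main obstacle.} The delicate point is the lower-semicontinuity step, which is exactly what establishes $L=F(P^\ast)$ and rules out a strict gap $L<F(P^\ast)$. The plan is to invoke Donsker--Varadhan, $F(P)=\sup_g\{\int g\,dP-\log\int e^g\,dP^0\}$ with the supremum over bounded measurable $g$. For each fixed bounded $g$, the map $P\mapsto\int g\,dP$ is TV-continuous, and a supremum of continuous maps is lower semicontinuous.

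A second point to check is that closedness is needed only for each individual $\p_j$, not for $\p$. This works because $\p=\bigcap_j\p_j$, so membership of $Q$ in every $\p_j$ is enough to place it in $\p$.
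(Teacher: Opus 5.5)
Your proof is correct and follows the paper's route: the value bound from nestedness, a TV-convergent subsequence placed in $\p$ by closedness of each $\p_j$, lower semicontinuity of $D_{\rm KL}(\cdot\|P^0)$ to match the limiting value, uniqueness of the I-projection, and the subsequence principle. Two of your additions go beyond the paper, which supplies neither: the Donsker--Varadhan justification of lower semicontinuity, and the Pythagorean estimate $D_{\rm KL}(P_j^\ast\|P_k^\ast)\le D_{\rm KL}(P_j^\ast\|P^0)-D_{\rm KL}(P_k^\ast\|P^0)$ for $j\ge k$; combined with Pinsker's inequality and completeness of the TV metric, that estimate makes the relative-compactness hypothesis unnecessary.
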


\begin{proof}
By construction, it is clear that
\begin{equation}\label{eq:limsup}
    \limsup_{k} \inf_{P \in \p^{k}} D_{\rm KL}(P || P^{0}) \le \inf_{P \in \p} D_{\rm KL}(P || P^{0}).
\end{equation}
Since the sequence
$\left\{
\arg\min_{P\in\p_k}
D_{\mathrm{KL}}(P\|P^0)
\right\}_{k\in\mathbb{N}}$
is relatively compact in total variation, there exists a subsequence $\{k_\ell\}$ and a probability measure $P^\ast$ such that
\[
\arg\min_{P\in\p_{k_\ell}}
D_{\mathrm{KL}}(P\|P^0)
\xrightarrow{\mathrm{TV}}
P^\ast.
\]
For any fixed $j$, we have $k_\ell\geq j$ for all sufficiently large
$\ell$, and therefore, by nestedness,
\[
P_{k_\ell}^\ast\in\p_{k_\ell}\subseteq\p_j.
\]
Since $\p_j$ is closed in the total variation topology and
$P_{k_\ell}^\ast\xrightarrow{\mathrm{TV}}P^\ast$, it follows that
$P^\ast\in\p_j$. Since this holds for every $j$,
\[
P^\ast\in\bigcap_{j=1}^{\infty}\p_j=\p.
\]
By lower semicontinuity of the KL divergence with respect to total variation,
\begin{equation}\label{eq:liminf}
    \liminf_{k_{l}} \inf_{P \in \p^{k_{l}}} D_{\rm KL}(P || P^{0}) = \liminf_{k_{l}}D_{\rm KL}\left(\argmin_{P \in \p^{k_{l}}}D_{\rm KL}(P | P^{0})\, \| \, P^{0}\right)  \ge  D_{\rm KL}(P^{*} || P^{0}) \ge \inf_{P \in \p} D_{\rm KL}(P || P^{0}).
\end{equation}
By~\eqref{eq:limsup} and~\eqref{eq:liminf}, we have
\begin{equation*}
    \lim_{k_{l}} \inf_{P \in \p^{k_{l}}} D_{\rm KL}(P || P^{0}) = \inf_{P \in \p} D_{\rm KL}(P || P^{0}).
\end{equation*}
By the existence and uniqueness of the minimizer of the KL divergence and relative compactness, we have
\begin{equation*}
    \lim_{k_{l}} \argmin_{P \in \p^{k_{l}}}D_{\rm KL}(P \| P^{0}) = \argmin_{P \in \p}D_{\rm KL}(P \| P^{0}).
\end{equation*}
Since the above argument can be applied to any subsequence of $\argmin_{P \in \p^{k}}D_{\rm KL}(P | P^{0})$ to extract a further subsequence with the same limit, it follows that
\begin{equation*}
    \lim_{k} \argmin_{P \in \p^{k}}D_{\rm KL}(P \| P^{0}) = \argmin_{P \in \p}D_{\rm KL}(P \| P^{0}),
\end{equation*}
which finishes the proof.
\end{proof}

Theorems~\ref{thm:mtd1_converges} and~\ref{thm:mtd2_converges} address two complementary limits.
Theorem~\ref{thm:mtd1_converges} considers convergence under a sequence of increasingly accurate reference measures with a fixed feasible set, whereas Theorem~\ref{thm:mtd2_converges} considers convergence under a nested sequence of feasible sets with a fixed reference measure.
In many applications, both mechanisms occur simultaneously where one updates the reference measure from one stage to the next while also enriching the feasible set.
The following corollary combines these two convergence mechanisms via a diagonal argument, and this progressive refinement strategy is demonstrated in the engineering example of Section~\ref{subsec:trommels}.

\begin{corollary}[Progressive refinement of reference measures and feasible sets]\label{cor:feasible_sets}
Let $\p_{1} \supset \p_{2} \supset \dots \supset \p$
where each $\p_k$ is closed in the total variation topology.
Let
$\{P_m^0\}_{m=1}^{\infty}$ be a sequence of initial probability measures
such that
\[
\eta_m
:=
\left\|
\log\left(
\frac{dP_m^0}{dP_\infty^0}
\right)
\right\|_{L^\infty(P_\infty^0)}
\longrightarrow 0.
\]
For each pair $(k,m)$ and each $k$ define, respectively
\[
Q_{k,m}
:=
\arg\min_{P\in\p_k} D_{\mathrm{KL}}(P\|P_m^0), \qquad
Q_k^\infty
:=
\arg\min_{P\in\p_k} D_{\mathrm{KL}}(P\|P_\infty^0),
\]
and further define $
Q_\infty
:=
\arg\min_{P\in\p} D_{\mathrm{KL}}(P\|P_\infty^0).
$
Assume the hypotheses of Theorems~\ref{thm:mtd1_converges} and~\ref{thm:mtd2_converges} hold. Then there exists an increasing sequence of indices $m(k)\to\infty$ such that $Q_{k,m(k)} \xrightarrow{\mathrm{TV}} Q_\infty$.
\end{corollary}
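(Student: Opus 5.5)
The plan is a triangle inequality in total variation, with one leg controlled by each theorem. For every pair $(k,m)$ write
\[
d_{\mathrm{TV}}(Q_{k,m},Q_\infty)\le d_{\mathrm{TV}}(Q_{k,m},Q_k^\infty)+d_{\mathrm{TV}}(Q_k^\infty,Q_\infty).
\]
Theorem~\ref{thm:mtd1_converges} bounds the first term and Theorem~\ref{thm:mtd2_converges} bounds the second. The index $m(k)$ is then chosen so that the first term is negligible relative to the second.

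\textbf{Second term.} Apply Theorem~\ref{thm:mtd2_converges} with the fixed reference measure $P^0=P_\infty^0$ and the nested closed convex sets $\p_k$. This is legitimate because the corollary assumes the relevant existence and relative-compactness hypotheses. It gives $\varepsilon_k:=d_{\mathrm{TV}}(Q_k^\infty,Q_\infty)\to 0$.

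\textbf{First term.} Fix $k$ and apply Theorem~\ref{thm:mtd1_converges} with feasible set $\p_k$ and reference sequence $\{P_m^0\}_m\to P_\infty^0$. This yields
\[
d_{\mathrm{TV}}(Q_{k,m},Q_k^\infty)\le \sqrt{\eta_m}.
\]
The key point is that this bound does not depend on $k$. The estimate in Theorem~\ref{thm:mtd1_converges} uses only $\eta_m$, the Pythagorean inequality, and Pinsker's inequality. None of these involves the particular convex feasible set, beyond convexity and existence of the I-projections. The bound also holds only once $\eta_m$ is finite, which is true for all large $m$.

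\textbf{Choice of $m(k)$.} Choose $m(k)$ increasing, with $m(k)\ge k$ and $\sqrt{\eta_{m(k)}}\le 1/k$. Such a choice exists because $\eta_m\to0$: pick $m(k)>m(k-1)$ large enough. Then
\[
d_{\mathrm{TV}}(Q_{k,m(k)},Q_\infty)\le 1/k+\varepsilon_k\to 0 .
\]
Because the first bound is uniform in $k$, no genuine diagonal extraction is needed; any $m(k)\to\infty$ would already suffice. A real diagonal argument would only be needed if the bound degraded with $k$.

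\textbf{Main obstacle.} The delicate step is verifying the hypotheses of Theorem~\ref{thm:mtd1_converges} for every $\p_k$ at once. That theorem requires each $\p_k$ to be convex and nonempty, and requires the I-projections of both $P_m^0$ and $P_\infty^0$ onto $\p_k$ to exist with finite KL divergence. I would read these conditions into the assumption "the hypotheses of Theorems~\ref{thm:mtd1_converges} and~\ref{thm:mtd2_converges} hold," applied for each $k$. I would also note that mutual absolute continuity of $P_m^0$ and $P_\infty^0$ (for large $m$) transfers finiteness between the two KL problems, since the two divergences differ by at most $\eta_m$.
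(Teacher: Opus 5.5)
Your proof is correct and follows the paper's argument. It uses the same triangle inequality through $Q_k^\infty$, with Theorem~\ref{thm:mtd1_converges} applied on each fixed $\p_k$ for the first leg and Theorem~\ref{thm:mtd2_converges} with reference $P_\infty^0$ for the second. Your extra observation that the bound $\sqrt{\eta_m}$ does not depend on $k$ is valid and slightly sharpens the paper's proof, which only uses convergence for each fixed $k$ to choose $m(k)$ with $d_{\mathrm{TV}}(Q_{k,m(k)},Q_k^\infty)\le 1/k$; as you note, any $m(k)\to\infty$ then suffices.
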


\begin{proof}
For a fixed $k$,  Theorem~\ref{thm:mtd1_converges} applied to the sequence $\{P_m^0\}_{m=1}^\infty$ implies $Q_{k,m} \xrightarrow{\mathrm{TV}} Q_k^\infty$ as $m\to\infty$.
Hence, for each $k$ we may choose an index $m(k)$, with $m(k)$ strictly increasing, such that
$d_{\mathrm{TV}}(Q_{k,m(k)},Q_k^\infty) \le \frac{1}{k}$.
Next, applying Theorem~\ref{thm:mtd2_converges} to the nested feasible sets $\{\p_k\}$ implies $Q_k^\infty \xrightarrow{\mathrm{TV}} Q_\infty$ as $k\to\infty$.
Therefore, by the triangle inequality,
\[
d_{\mathrm{TV}}(Q_{k,m(k)},Q_\infty)
\le
d_{\mathrm{TV}}(Q_{k,m(k)},Q_k^\infty)
+
d_{\mathrm{TV}}(Q_k^\infty,Q_\infty)
\le
\frac{1}{k}
+
d_{\mathrm{TV}}(Q_k^\infty,Q_\infty),
\]
and the right-hand side tends to zero as $k\to\infty$.
Thus, $Q_{k,m(k)} \xrightarrow{\mathrm{TV}} Q_\infty$,
which proves the claim.
\end{proof}

\section{Computational Approximations}\label{sec:comp_approx}

Recall that the iDCI solution to the GSIP is defined as the limit of a sequence of local solutions to subproblems given by~\eqref{sub_min}.
Since each of these subproblems is simply defined as the DCI solution to a SIP with a given initial and observed distribution, we first describe, at a high-level, the manner in which a DCI solution can be approximated and the predictability assumption numerically verified.

\subsection{Numerical approximation of DCI}

First, we rewrite~\eqref{eq:updated_density} as
\begin{equation}\label{eq:updated_density_r}
    \updatedens(\lambda) = \initdens(\lambda) r(\lambda), \ \text{ where } \ r(\lambda) := \frac{\obsdens(\phi(\lambda))}{\predictdens(\phi(\lambda))}.
\end{equation}
For a given set of $N$ independent identically distributed (iid) samples $\{\lambda^{(i)}\}\sim\initdens$, we refer to the corresponding $\{r(\lambda^{(i)})\}$ values as simply the ``$r$-values'' that re-weight this iid set of samples so that the weighted sample set can be interpreted as being drawn from $\updatedens$.
We can, for instance, perform rejection sampling (as in \cite{BJW18a}) on such a sample set to generate an iid set of samples from $\updatedens$.
Note that if the predictability assumption holds, then $\updatedens$ is indeed a density and it follows that
\begin{equation}
     \mathbb{E}_\text{init}(r(\lambda)) = \int_\pspace \updatedens(\lambda) d\mu_\pspace = 1 \ \Rightarrow \ \frac{1}{N}\sum_i r(\lambda^{(i)}) \approx 1.
\end{equation}
This naturally leads to a computational diagnostic where, for a given set of parameter samples and associated $r$-values, we check the validity of the predictability assumption by verifying that
\begin{equation}\label{eq:diagnostic}
    \left\vert \frac{1}{N}\sum_i r(\lambda^{(i)}) - 1 \right\vert < tol_r
\end{equation}
for a specific tolerance $tol_r$.
This diagnostic is utilized within each iterated QoI subspace of Algorithm~\ref{alg:main}.

\subsection{Numerical approximation of iDCI}

\begin{algorithm}\caption{Iterative DCI (iDCI)}\label{alg:main}
\begin{algorithmic}
    \REQUIRE Predicted QoI samples, initialized $r$-values, observed densities, tolerances and $\max_\text{epoch}.$
    \FOR {epoch = $1,\ldots, \max_{\text{epoch}}$}
        \FOR {each QoI subspace}
        \STATE {Normalize current $r$-values to have unit sample mean.}
            \STATE {Estimate predicted density on subspace with QoI samples and $r$-values.}
            \STATE {Multiplicatively update the $r$-values using DCI.}
            \IF {diagnostic of $r$-values within tolerance}
        	\STATE {Continue to next subspace.}
            \ELSE
                \STATE {Exit both loops.}
            \ENDIF
        \ENDFOR
        \IF {absolute or relative KL divergences of QoI marginals within tolerances}
        	\STATE {Exit loop.}
            \ELSE
                \STATE {Continue to next epoch.}
        \ENDIF
    \ENDFOR
    \ENSURE Final iDCI $r$-values.
\end{algorithmic}
\end{algorithm}

Assume that $\{r(\lambda^{(i)})\}$ has been computed as above for both some particular set of iid samples from $\initmeas$, QoI map $\phi$, and observed probability measure associated with the output of $\phi$.
Observe that applying a different QoI map $\tilde{\phi}$ to generate $\{\tilde{\phi}(\lambda^{(i)})\}$ and then computing the weighted KDE with the prior $r$-values will generate an approximation of the push-forward density defined by $\updatedens$ and $\tilde{\phi}$.
In other words, neither explicit approximation of $\updatedens$ nor direct generation of iid samples from $\updatedens$ is necessary to construct an approximation to its push-forward density associated with a different QoI map.
This is a critical observation utilized in Algorithm~\ref{alg:main} (taken from \cite{JBW+26}) to generate the sequence of local solutions to the subproblems that approximates the iDCI solution to the GSIP.

As mentioned in~\cite{JBW+26}, the algorithm avoids an overly prescriptive approach to estimating densities and applying DCI at each iteration.
In the numerical results of Section~\ref{sec:numerics}, we provide more specific computational details for each example related to the use of parametric or KDE estimates of densities.

\subsection{Numerical approximation of copula-transformed iDCI}\label{subsec:num_approx_copulas}

\begin{algorithm}
\caption{Copula-Transformed iDCI (CT+iDCI)}
\label{alg:ctidci}
\begin{algorithmic}
    \REQUIRE Predicted QoI samples, initial $r$-values, observed QoI samples,
    copula model, tolerances and $\max_\text{iter}$.
    \STATE {Estimate the observed copula model from the observed QoI samples.}
    \FOR {iteration = $1,\ldots,\max_{\text{iter}}$}
        \STATE {Apply Algorithm~\ref{alg:main} to obtain the final iDCI
        $r$-values.}
        \STATE {Estimate the predicted copula model using the final iDCI $r$-values.}
        \STATE {Update the $r$-values by the ratio of the estimated
        observed and predicted copula densities in the common copula coordinate system.}
        \STATE {Use the updated $r$-values as the initial $r$-values for
        the next iteration.}
        \IF {CT+iDCI convergence criteria are satisfied}
            \STATE {Exit loop.}
        \ELSE
            \STATE {Continue to next iteration.}
        \ENDIF
    \ENDFOR
    \ENSURE Final CT+iDCI $r$-values.
\end{algorithmic}
\end{algorithm}

Algorithm~\ref{alg:ctidci} summarizes the sequential copula-transformed iDCI procedure given by~\eqref{eq:operator_iteration}, with the copula-estimation step providing the only component not already specified by Algorithm~\ref{alg:main}.
The copula transformation of the iDCI solution requires approximations of the copulas associated with the observed distribution and the {\em joint} push-forward of the iDCI solution.
In principle, these copulas may be estimated using any suitable parametric or non-parametric approach as mentioned in Section~\ref{sec:iDCI_sequences}.
An important computational consequence of Theorem~\ref{thm:copula+iDCI} is that, after iDCI has enforced the marginal push-forward constraints, the remaining dependence structure may be approximated independently of the marginal density models.
In the numerical examples of Section~\ref{sec:numerics}, we exploit this flexibility by using Gaussian copulas.

For the Gaussian copula approximations used in this work, the dependence structure is characterized by an estimated correlation matrix.
For the observed distribution, this correlation is estimated directly from the observed QoI samples.
For the predicted distribution associated with an
iDCI solution, the corresponding correlation is estimated from the predicted QoI samples using the final iDCI $r$-values as probability weights.
The observed marginal CDFs are then used to map the QoI samples to the common copula coordinate system identified in Theorem~\ref{thm:copula+iDCI}.
In the Gaussian examples of Section~\ref{sec:numerics}, these CDFs are available from the fitted parametric Gaussian distributions, whereas in the non-Gaussian examples they are obtained by integrating the corresponding KDE approximations.
In either case, the resulting probability-integral transforms are mapped to standard normal coordinates and used to evaluate Gaussian copula densities parameterized by the estimated observed and predicted correlations.
The approximate copula transformation is then implemented by multiplying the existing iDCI $r$-values by the ratio of these estimated copula densities, so that no explicit construction of the transformed density on parameter space is required.
The interested reader is referred to Section~\ref{sec:supplementary} for information on obtaining and executing the code used to construct the copula approximations and transformations presented below.

\section{Numerical Examples}\label{sec:numerics}

This section presents three numerical examples illustrating the principal theoretical developments of Sections~\ref{sec:copulas} and~\ref{sec:iDCI_sequences}.
The first example demonstrates how the geometry induced by the QoI map significantly impacts the importance of the copula transformation.
The second illustrates an adaptive reference-measure refinement strategy motivated by Theorem~\ref{thm:mtd1_converges} and Corollary~\ref{cor:approx_cc_iDCI} for improving density and copula estimation under a fixed computational budget.
The final example considers a computational mechanics application involving asynchronously acquired experimental data and illustrates a progressive feasible-set enrichment strategy motivated by Theorem~\ref{thm:mtd2_converges} and Corollary~\ref{cor:feasible_sets}, while illustrating the flexibility of iDCI with heterogeneous vector-valued QoI maps.

The latter two examples are not intended to verify the sufficient conditions or asymptotic convergence results of Section~\ref{sec:iDCI_sequences}.
Rather, they demonstrate finite-sample computational strategies motivated by these results: adaptive refinement of the reference measure and progressive enrichment of the feasible set, respectively.

\subsection{Dependence transformation in Linear Inverse Problems}\label{subsec:copulas_linear_example}

To isolate the effect of the induced geometry from the QoI map, we first consider Gaussian initial and data-generating distributions together with linear QoI maps so that all probability distributions may be represented parametrically.
The results subsequently show that the only significant source of discrepancy between the iDCI and DCI solutions arises from the dependence structure induced by the QoI map rather than from numerical approximation errors in density estimation.
Then, to demonstrate that parametric copula estimates can significantly improve non-parametric estimates of iDCI solutions, we  consider non-Gaussian initial and data-generating distributions and utilize KDEs for estimating the associated predicted and observed distributions.

\subsubsection{Gaussian Distributions and the Impact of QoI Geometries}

Consider the following three QoI maps:
\begin{align}
    Q_1(\lambda) &= 2\lambda_1 + \lambda_2, \\
    Q_2(\lambda) &= 2.5\lambda_1 + 0.5\lambda_2, \\
    Q_3(\lambda) &= \lambda_1-\lambda_2.
\end{align}
We assume that $Q_1(\lambda)$ is currently being measured and investigate the impact of choosing $Q_2$ or $Q_3$ as an additional measurement to construct iDCI solutions with and without copula transformations.
The pair $(Q_1,Q_2)$ induces a highly skewed inverse geometry because the level sets of the two component maps are nearly parallel, whereas $(Q_1,Q_3)$ produces a substantially better-conditioned inverse geometry with level sets that intersect at a much larger angle.
We therefore refer to the pair $(Q_1, Q_2)$ as defining the ``skewed QoI map'' and $(Q_1, Q_3)$ as defining the ``non-skewed QoI map'' to help clarify the discussion.
We set
\[
\initdens\sim N(\mathbf{0}, I_{2\times 2}) \quad \text{ and } \quad
\pi_{\rm DG}\sim N(\mu, \Sigma) \quad \text{ where } \quad \mu=\mat{0.5 \\ 0.5} \quad \text{ and } \quad \Sigma = \mat{0.5 & 0.4 \\ 0.4 & 0.5}
\]
In all results, we generate $10^4$ iid samples from the initial distribution and $10^3$ iid samples from the DG distribution.
Unbiased estimates of the means and covariances are then utilized to compute the parametric estimates for each of the Gaussian distributions involved in this example.
All KL divergences reported for this example are computed using the exact formula for the KL divergence between two Gaussian distributions with specified means and covariances.

\begin{figure}
    \centering
    \includegraphics[width=0.23\linewidth]{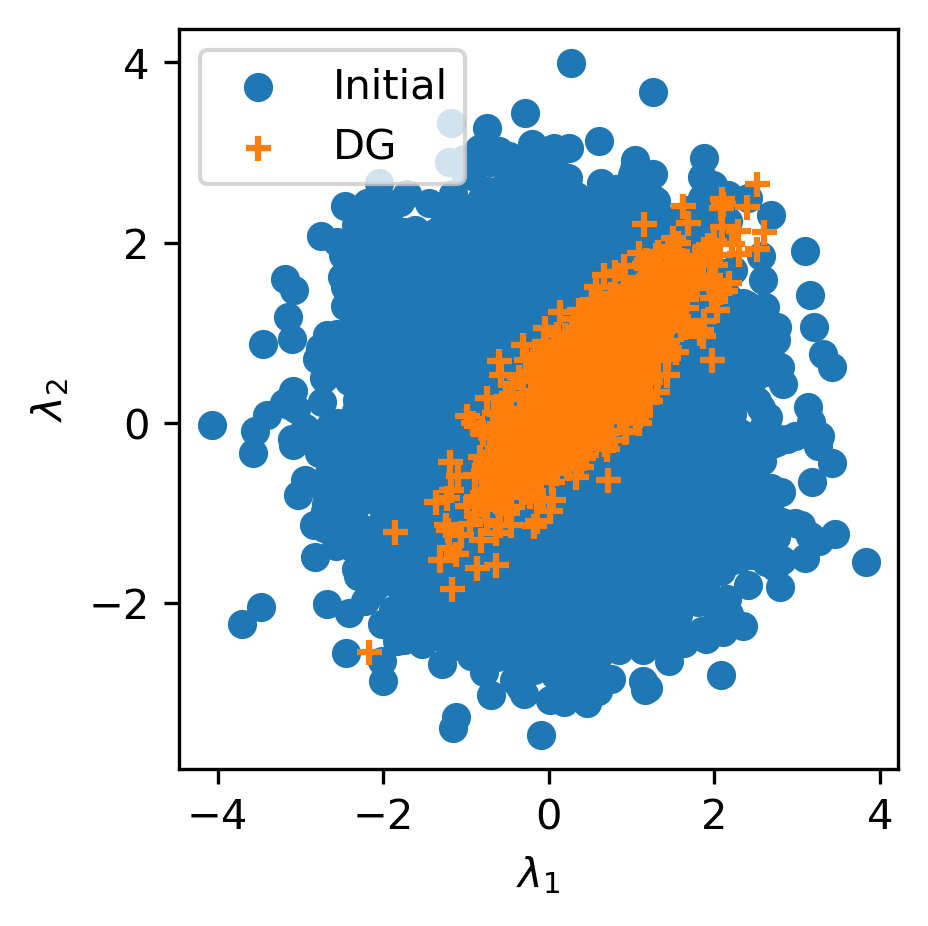}
    \includegraphics[width=0.23\linewidth]{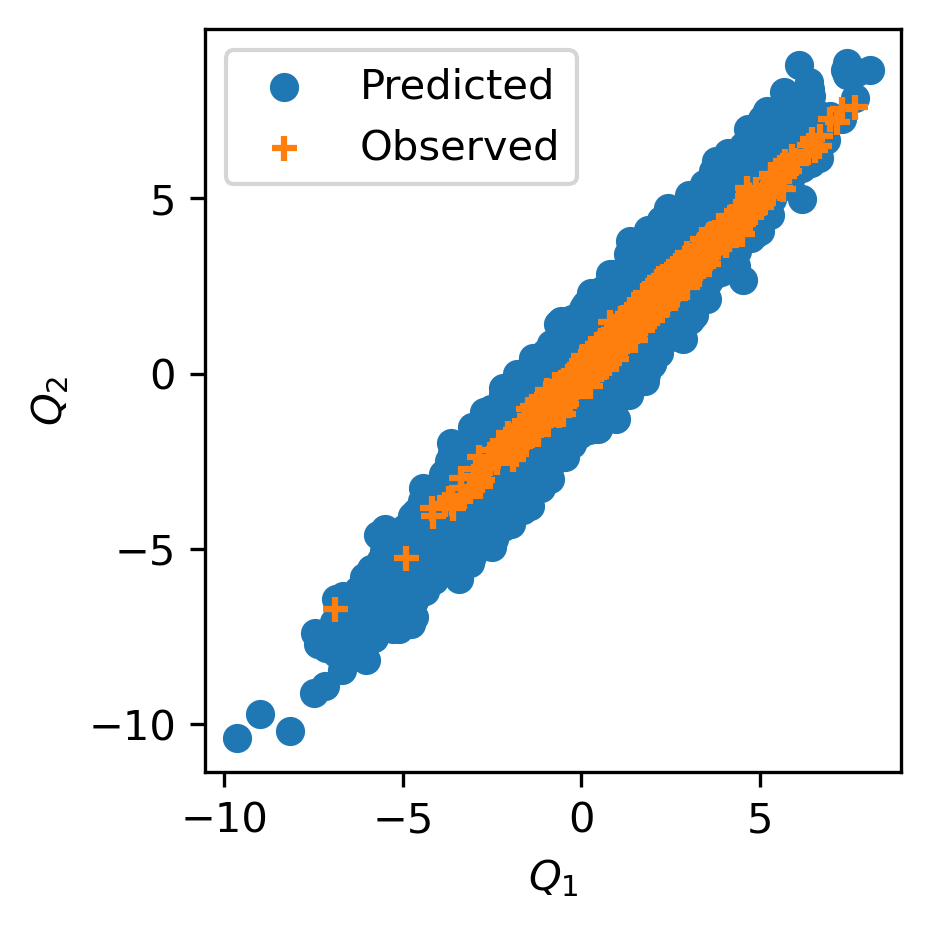}
    \includegraphics[width=0.23\linewidth]{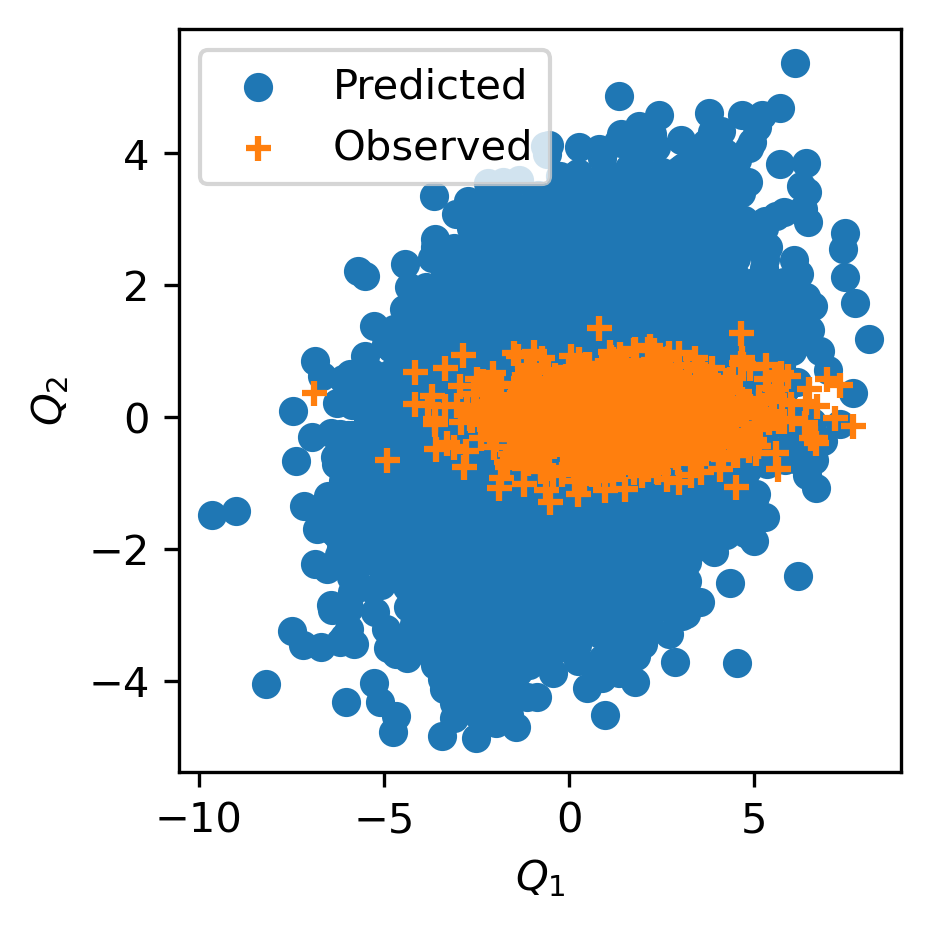}
    \caption{Parameter samples (left) and corresponding observable samples (center/right) for the skewed and non-skewed linear QoI maps.}
    \label{fig:ex1-Gaussian-param-space-dataspaces}
\end{figure}

Figure~\ref{fig:ex1-Gaussian-param-space-dataspaces} illustrates the parameter and QoI samples.
The left plot shows samples from the initial and data-generating parameter distributions, while the center and right plots compare the predicted and observed QoI samples associated with the skewed and non-skewed QoI maps, respectively.
Although both QoI maps are linear, the skewed map produces observable distributions whose dominant direction is nearly aligned with the reduced-variance direction of the data-generating distribution.
This geometric alignment is the primary source of the substantial dependence transformation observed below.
Specifically, since the iDCI algorithm enforces only the marginal push-forward constraints, any remaining discrepancy between the joint predicted and observed distributions is attributable to the copula, as established in Theorem~\ref{thm:KL+copula+iDCI}.
Consequently, the skewed QoI map provides a setting in which a substantial copula transformation is anticipated.

\begin{figure}
    \centering
    \includegraphics[width=0.75\linewidth]{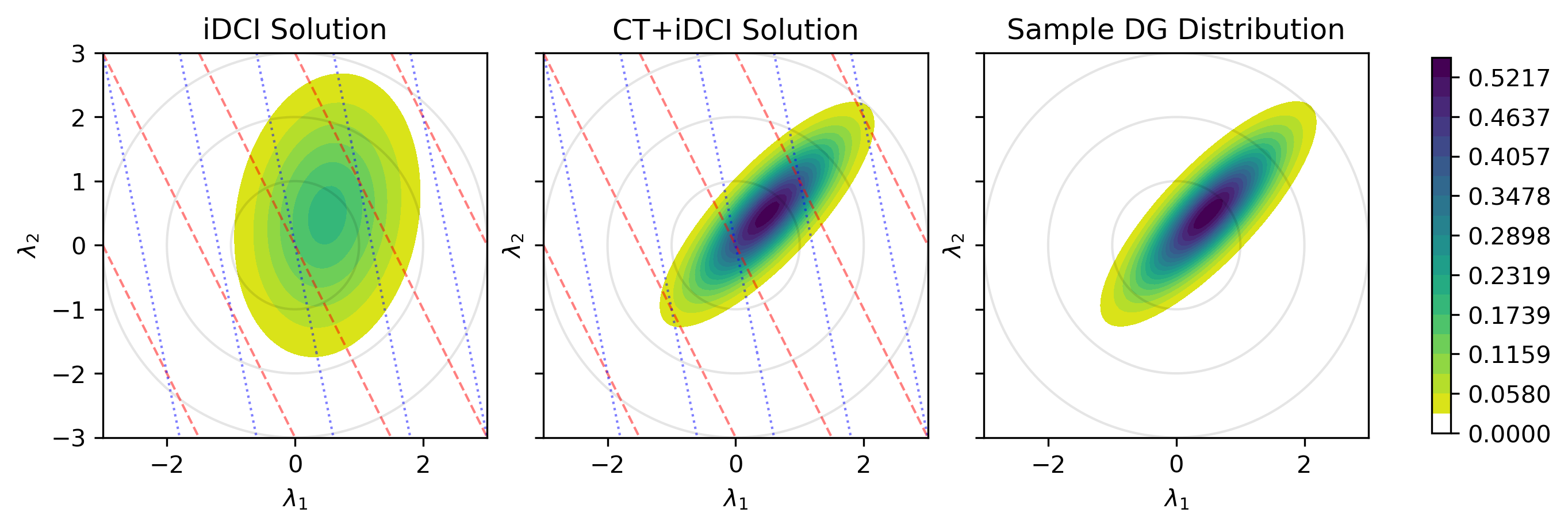}
    \caption{Parameter-space densities for the Gaussian example with the skewed QoI map. Left: iDCI solution. Center: CT+iDCI solution obtained using an exact Gaussian copula transformation. Right: Data-generating (DG) distribution. Gray circles denote Mahalanobis-distance contours of the standard normal initial distribution, while dashed and dotted lines indicate level sets of the component QoI maps.}
    \label{fig:ex1-Gaussian-CT-iDCI-skewed-QoI.png}
\end{figure}

In the parameter density comparison plots of Figures~\ref{fig:ex1-Gaussian-CT-iDCI-skewed-QoI.png} and~\ref{fig:ex1-Gaussian-iDCI-non-skewed-QoI.png}, the light gray circles correspond to isodensity contours of the standard bivariate normal distribution (i.e., the initial distribution) with Mahalanobis distances $r=1, 2, $ and $3$.
These contours provide geometric benchmarks for comparison with both the transformed distributions (both iDCI and CT+iDCI solutions) as well as the DG distribution.
Additionally, in the iDCI and CT+iDCI solution plots, we show five contour lines associated with each of the components in the QoI maps to provide geometric intuition regarding the impact these skewed and non-skewed structures have on the iDCI solution prior to the copula transformation.

\begin{figure}
    \centering
    \includegraphics[width=0.75\linewidth]{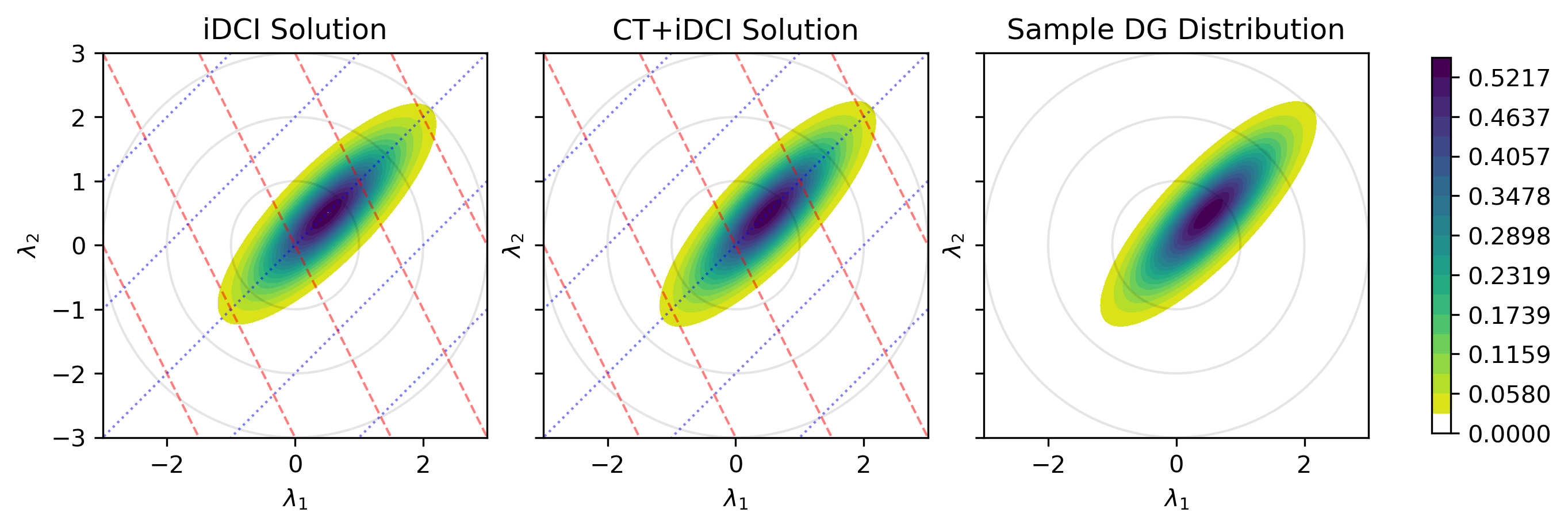}
    \caption{Parameter-space densities for the Gaussian example with the non-skewed QoI map. Left: iDCI solution. Center: CT+iDCI solution. Right: Data-generating (DG) distribution. Gray circles denote Mahalanobis-distance contours of the standard normal initial distribution, while dashed and dotted lines indicate level sets of the component QoI maps.}
    \label{fig:ex1-Gaussian-iDCI-non-skewed-QoI.png}
\end{figure}

It is evident from Figure~\ref{fig:ex1-Gaussian-CT-iDCI-skewed-QoI.png} that the iDCI solution associated with the skewed QoI map captures the marginal behavior of the data-generating distribution but fails to reproduce its dependence structure, leading to a KL divergence of approximately $\expnumber{6.433}{-1}$.
Applying the copula transformation produces a distribution that is visually indistinguishable from the data-generating distribution and significantly reduces the KL divergence to approximately $\expnumber{4.929}{-4}$, numerically illustrating Theorem~\ref{thm:cc_i_equals_DCI} in the Gaussian setting.

For the non-skewed QoI map results shown in Figure~\ref{fig:ex1-Gaussian-iDCI-non-skewed-QoI.png}, the geometry induced by the QoI map already produces an iDCI solution that closely matches the data-generating distribution, with a KL divergence of only $\expnumber{1.255}{-3}$.
Consequently, the copula transformation produces only a modest additional improvement, reducing the KL divergence to $\expnumber{1.058}{-5}$.
These two examples demonstrate that the importance of the copula transformation is governed primarily by the geometry induced by the QoI map rather than by the marginal distributions themselves.

\subsubsection{Non-Gaussian Distributions with a Gaussian Copula Approximation}

\begin{figure}
    \centering
    \includegraphics[width=0.23\linewidth]{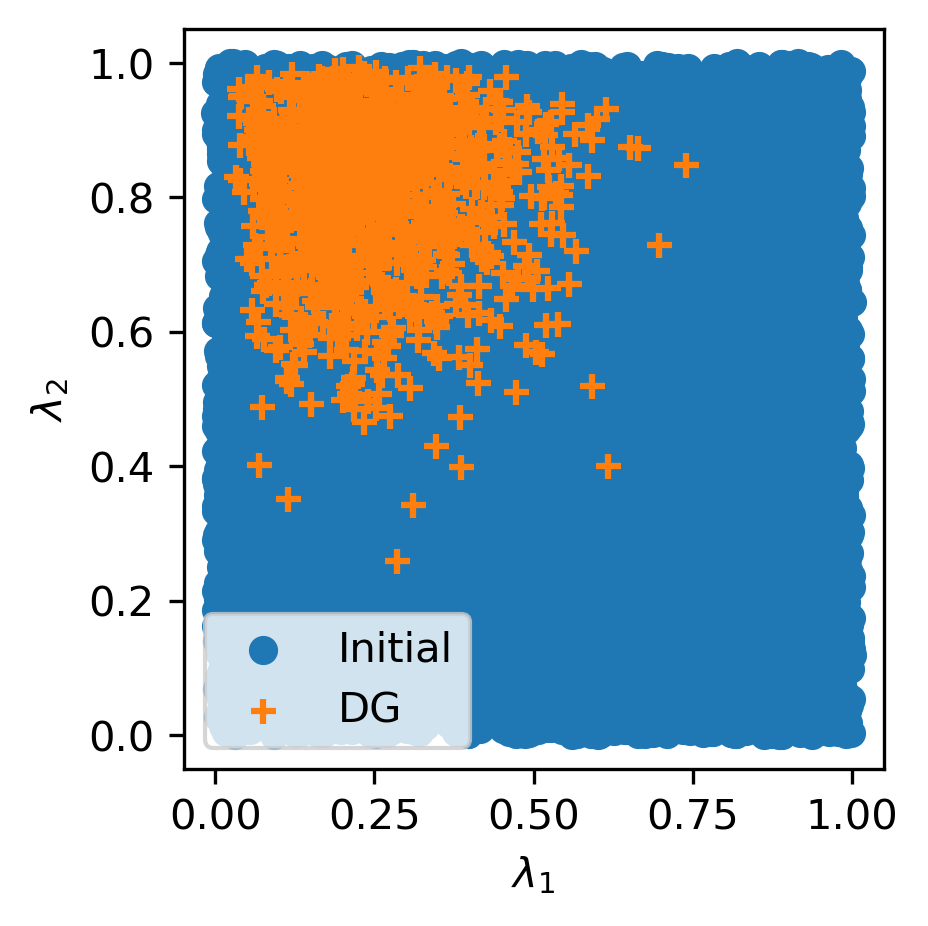}
    \includegraphics[width=0.23\linewidth]{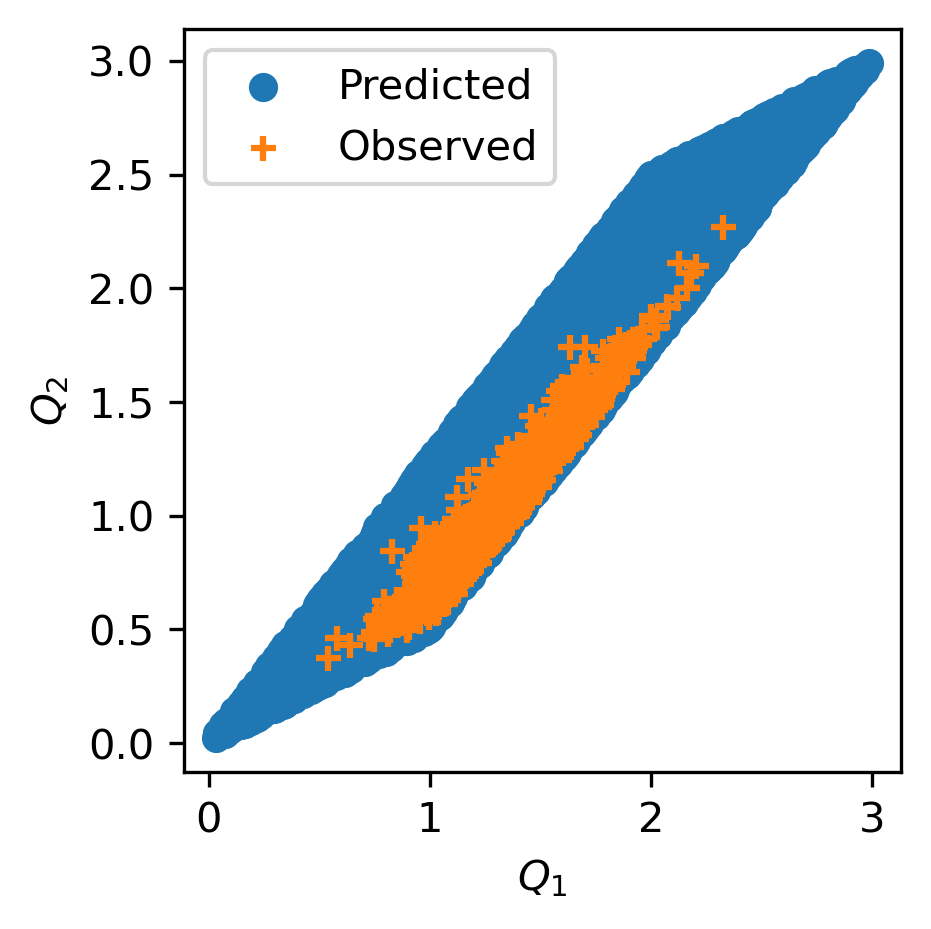}
    \includegraphics[width=0.23\linewidth]{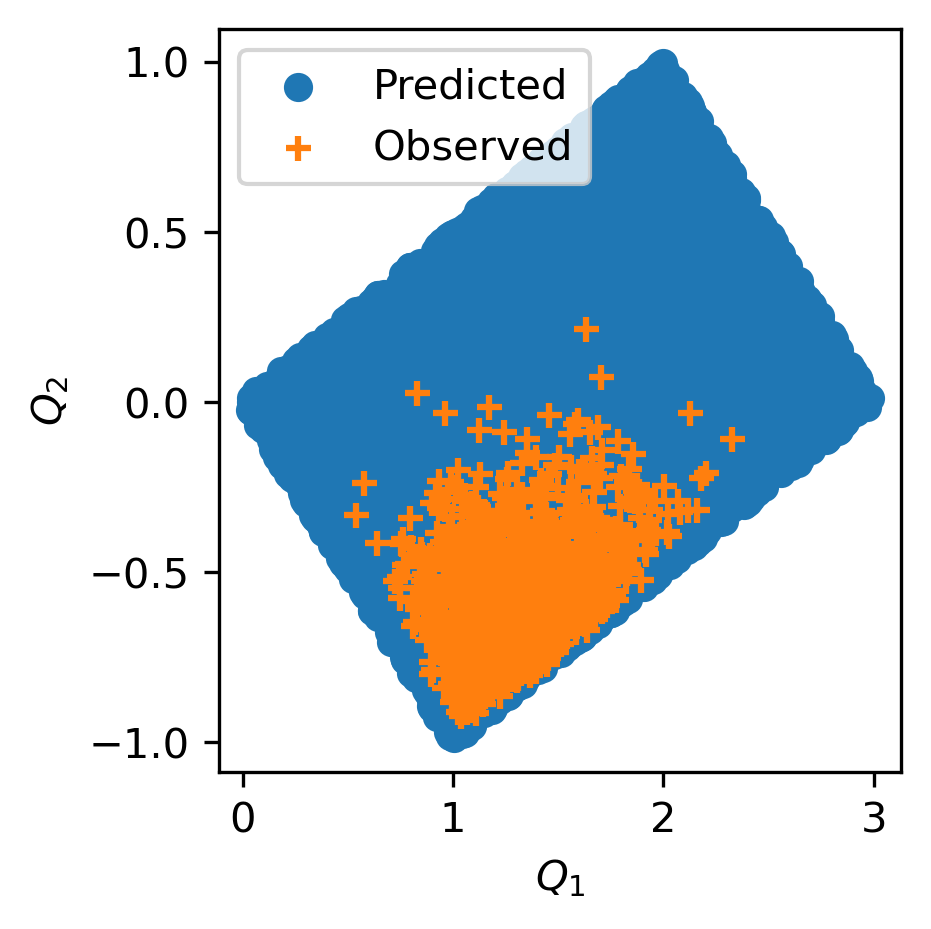}
    \caption{Parameter and observable samples for the non-Gaussian example. Left: Initial parameter samples (blue) and data-generating (DG) samples (orange). Center: Predicted and observed samples for the skewed QoI map. Right: Predicted and observed samples for the non-skewed QoI map.}
    \label{fig:ex1-Betas-param-space-dataspaces}
\end{figure}

The previous example intentionally removed nearly all numerical approximation error by restricting attention to Gaussian distributions represented through parametric estimates.
We now consider a uniform initial distribution, a non-Gaussian data-generating distribution constructed from independent Beta distributions, weighted kernel density estimates used for non-parametric estimation of the marginal predicted densities computed in the iDCI algorithm, and a Gaussian approximation of the copula.
In other words, both the marginal predicted densities and the copula are estimated numerically.
This allows us to assess the robustness of the proposed copula transformation methodology under more realistic computational conditions.
Since the non-parametric copula is estimated with a Gaussian, we expect that applying it to transform the iDCI solution will result in a probability measure that fails to satisfy the push-forward constraints, which suggests applying the iDCI algorithm once more, consistent with the iterative framework underlying Corollary~\ref{cor:approx_cc_iDCI} that is explored in more depth in Section~\ref{subsec:adaptive_measures}.
With this in mind, we therefore apply the iDCI algorithm to the copula-transformed solution, and refer to that output as the CT+iDCI solution.

Since the iDCI and CT+iDCI solutions are compared to a kernel density estimate of the data-generating distribution, it is useful to place the reported KL divergences in the context of the sampling error associated with such estimates of this data-generating distribution.
Using 50 independent sets of $\expnumber{1}{3}$ samples drawn from the exact data-generating distribution to generate other KDE estimates, the average KL divergence between these and the one used in the experiments was $\expnumber{3.420}{-2}$, with a minimum of $\expnumber{2.058}{-2}$ and a maximum of $\expnumber{5.562}{-2}$.
Thus, we consider any iDCI or CT+iDCI solution with a KL divergence less than approximately $\expnumber{5}{-2}$ from the sample data-generating distribution as being on par with a separate direct approximation of the data-generating distribution.

\begin{figure}
    \centering
    \includegraphics[width=0.75\linewidth]{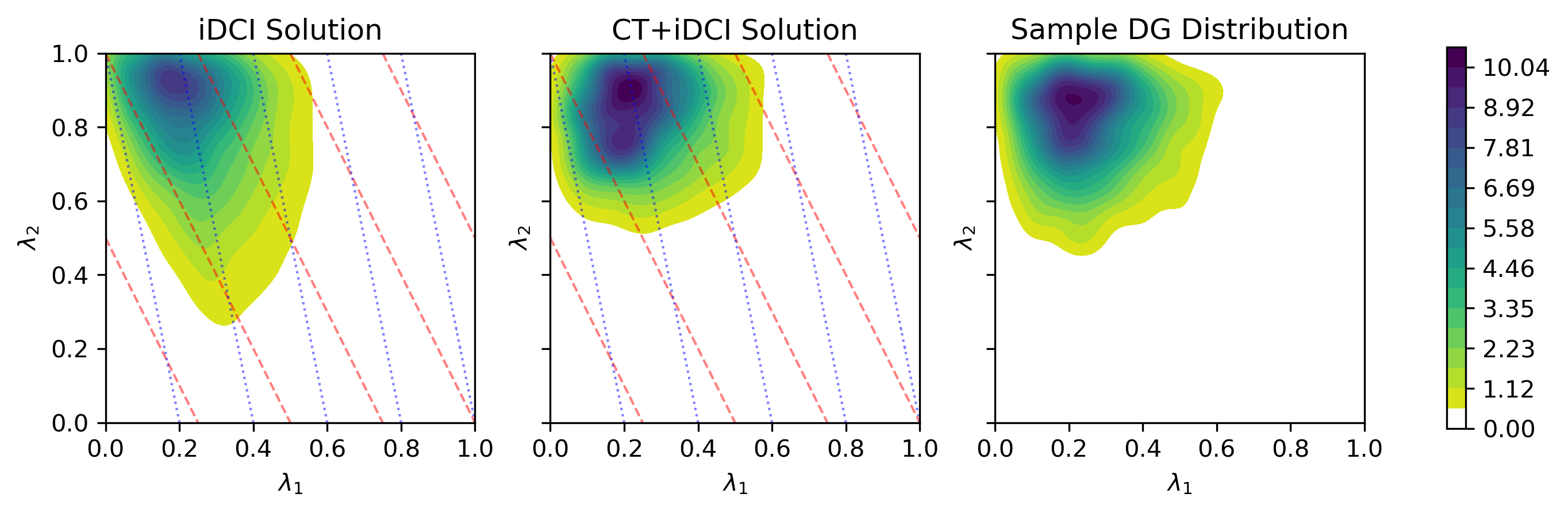}
    \caption{Parameter-space densities for the non-Gaussian example with the skewed QoI map. Left: iDCI solution estimated using weighted kernel density estimation. Center: CT+iDCI solution obtained using an approximate Gaussian copula transformation. Right: Kernel density estimate of the sampled data-generating distribution. Dashed and dotted lines indicate level sets of the component QoI maps.}
    \label{fig:ex1-Betas-CT-iDCI-skewed-QoI.png}
\end{figure}

Figure~\ref{fig:ex1-Betas-param-space-dataspaces} illustrates the parameter and observable samples associated with the non-Gaussian example.
Although the observable distributions differ substantially from the Gaussian case, the same distinction between the skewed and non-skewed QoI geometries remains evident.

The parameter-space densities obtained using the skewed QoI map are shown in Figure~\ref{fig:ex1-Betas-CT-iDCI-skewed-QoI.png}.
The iDCI solution exhibits a KL divergence of approximately $\expnumber{1.300}{-1}$ relative to the sample data-generating distribution, whereas the Gaussian copula transformation reduces this value to $\expnumber{3.663}{-2}$.
Although the copula model is now only an approximation, the resulting CT+iDCI solution achieves an accuracy comparable to the sampling variability associated with directly estimating the data-generating distribution at the same sample size.

\begin{figure}
    \centering
    \includegraphics[width=0.75\linewidth]{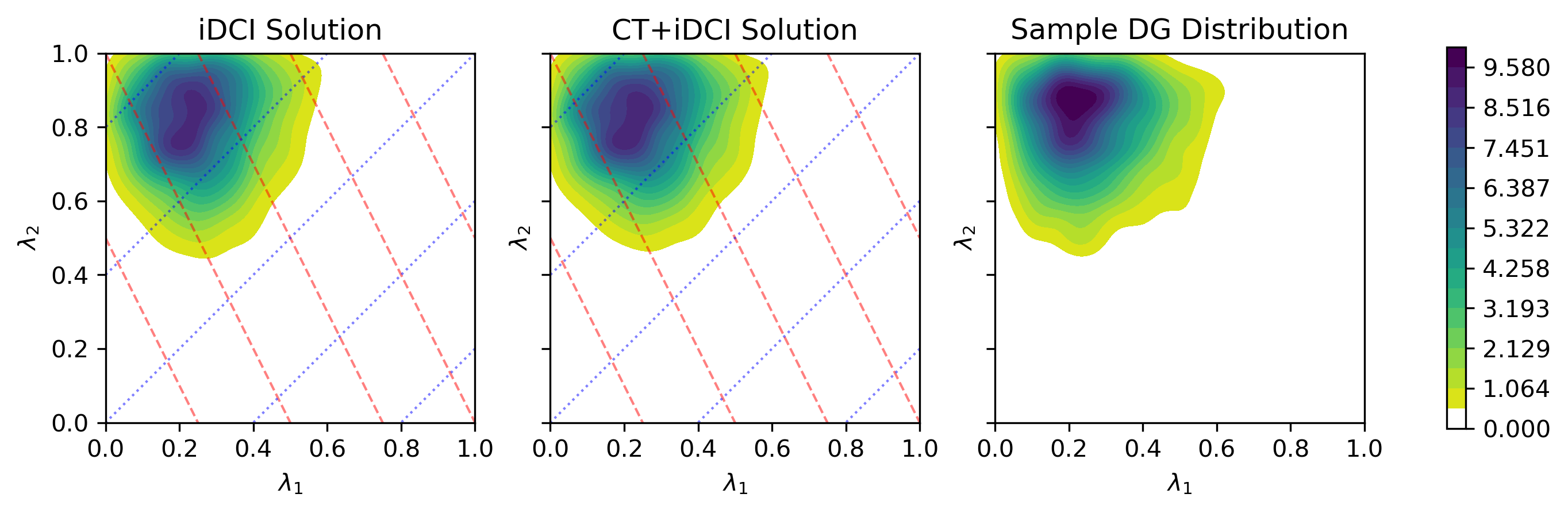}
    \caption{Parameter-space densities for the non-Gaussian example with the non-skewed QoI map. Left: iDCI solution estimated using weighted kernel density estimation. Center: CT+iDCI solution obtained using an approximate Gaussian copula transformation. Right: Kernel density estimate of the sampled data-generating distribution. Dashed and dotted lines indicate level sets of the component QoI maps.}
    \label{fig:ex1-Betas-CT-iDCI-nonskewed-QoI.png}
\end{figure}

The corresponding results for the non-skewed QoI map are shown in Figure~\ref{fig:ex1-Betas-CT-iDCI-nonskewed-QoI.png}.
As in the Gaussian example, the iDCI solution already provides an excellent approximation to the data-generating distribution (with a KL divergence of approximately $\expnumber{2.364}{-2}$ from the sample data-generating distribution), and the Gaussian copula transformation produces only a modest additional improvement that reduces the KL divergence to $\expnumber{2.084}{-2}$.
Together with the Gaussian examples, these results indicate that the geometry induced by the QoI map plays a significantly larger role in determining the magnitude of the copula transformation than the particular choice of marginal density model.

\subsubsection{Summary}

\begin{table}[ht]
\centering
\caption{Summary of the KL divergences from the sample data-generating distribution for the four linear inverse problems considered in Section~\ref{subsec:copulas_linear_example}.
The Gaussian examples utilize parametric density approximations, whereas the Beta examples employ weighted kernel density estimates together with a Gaussian copula approximation.
For the Beta examples, the final column reports the average KL divergence between the kernel density estimates of 50 separate sample data-generating distributions and the one utilized in the construction of the iDCI and CT+iDCI solutions.}
\label{tab:linear_examples}

\begin{tabular}{l|c|c|c}
\toprule
Example &
iDCI &
CT+iDCI &
DG Reference \\
\midrule
Gaussian (Skewed)
&
$\expnumber{6.433}{-1}$
&
$\expnumber{4.929}{-4}$
&
-- \\
\hline
Gaussian (Non-skewed)
&
$\expnumber{1.255}{-3}$
&
$\expnumber{1.058}{-5}$
&
-- \\
\hline
Beta (Skewed)
&
$\expnumber{1.300}{-1}$
&
$\expnumber{3.663}{-2}$
&
$\expnumber{3.420}{-2}$ \\
\hline
Beta (Non-skewed)
&
$\expnumber{2.364}{-2}$
&
$\expnumber{2.084}{-2}$
&
$\expnumber{3.420}{-2}$ \\

\bottomrule
\end{tabular}
\end{table}

Table~\ref{tab:linear_examples} summarizes the quantitative results for all four linear inverse problems considered in this section.
The takeaway is that the effectiveness of the copula transformation is governed principally by the geometry induced by the QoI map rather than by the choice of marginal distributions. In both the Gaussian and non-Gaussian settings, the skewed QoI map produces substantially larger improvements from the copula transformation than the corresponding non-skewed QoI map.

The summary presented in Table~\ref{tab:linear_examples} also provides an important computational perspective.
In the Gaussian examples, the nearly exact parametric representations remove almost all approximation error, allowing the observed improvements to be attributed solely to the dependence transformation as anticipated by the theory developed in Section~\ref{sec:copulas}.
By contrast, the estimated copula transformation to the non-Gaussian example achieves an accuracy that is comparable to the sampling variability associated with directly estimating the data-generating distribution at the same sample size.
This observation suggests that, beyond improving the dependence transformation, further reductions in the KL divergence will require more accurate estimates of the predicted push-forward densities and copulas rather than simply a different copula model.

\subsection{Adaptive Reference-Measure Refinement}\label{subsec:adaptive_measures}

In practical applications, the computational cost of model evaluations may severely limit the number of predicted samples available, which will impact the accuracy of the push-forward densities as well as the copulas and thus the iDCI and CT+iDCI solutions.
The convergence results of Theorem~\ref{thm:mtd1_converges} motivate an adaptive strategy to address this accuracy issue when computational constraints restrict sample sizes.
Rather than repeatedly sampling from the original initial distribution, which can be quite broad and result in many samples being ``down-weighted,'' we generate new samples from the reference measure that is recursively updated using the previous CT+iDCI solution.
With the notation of equation~\eqref{eq:operator_iteration}, let $k$ denote the current stage of the adaptation and $P_{k+1}^0
=
\widehat{\mathcal T}_{\rm CT}
\!\left(
\mathcal T_{\rm iDCI}(P_k^0)
\right)$ denote the new reference measure from which samples are generated.

To illustrate this strategy, we revisit the skewed linear QoI map introduced in Section~\ref{subsec:copulas_linear_example} together with the same standard normal initial distribution and data-generating distribution.
However, unlike the previous example, only $100$ parameter samples are generated at each stage.
Beginning with the standard normal reference measure, the corresponding CT+iDCI solution is computed and subsequently approximated by a multivariate Gaussian distribution.
This Gaussian approximation then serves as the reference measure from which the next set of $100$ parameter samples is generated.
The process is repeated for three adaptive refinement stages, thereby maintaining a fixed computational budget per stage while progressively concentrating samples in regions of parameter space that are most relevant to the inverse solution.

Table~\ref{tab:adaptive_sampling} summarizes the KL divergence between the data-generating distribution and both the iDCI and CT+iDCI solutions obtained at each adaptive refinement stage.
Collectively, these results demonstrate that recursively updating the reference measure rapidly improves the accuracy of both the predicted push-forward densities and the estimated copula while maintaining a fixed number of model evaluations per stage.

\begin{table}[ht]
\centering
\caption{Adaptive refinement of the reference measure using a fixed sample size of $100$ per stage. Beginning from a standard normal reference measure, the final CT+iDCI solution obtained at each stage is used as the reference measure for the subsequent stage. The table reports the KL divergence between the data-generating (DG) distribution and the corresponding iDCI and final CT+iDCI solutions}
\label{tab:adaptive_sampling}

\begin{tabular}{c|c|c}
\toprule
Adaptive Stage: $k$ &
$D_{\rm KL}(P_{\rm DG} \| \mathcal{T}_{\rm iDCI}(P_k^0))$ &
$D_{\rm KL}(P_{\rm DG} \| \widehat{\cal T}_{\rm CT}\left(\mathcal{T}_{\rm iDCI}(P_k^0)\right))$  \\
\midrule
 &
$\expnumber{5.409}{-1}$ &
$\expnumber{1.143}{-2}$  \\
\hline
 &
$\expnumber{3.980}{-2}$ &
$\expnumber{7.153}{-3}$ \\
\hline
 &
$\expnumber{9.300}{-4}$ &
$\expnumber{1.094}{-3}$\\
\bottomrule
\end{tabular}
\end{table}

Beginning from a standard normal reference measure, successive adaptive refinement stages rapidly improve the quality of both the iDCI and CT+iDCI approximations while maintaining a fixed computational budget of $100$ samples per stage.
By the third stage, both approximations have converged to within approximately $10^{-3}$ KL divergence of the data-generating distribution, demonstrating that recursively updating the reference measure provides an effective adaptive sampling strategy for accurately estimating both the predicted push-forward densities and the associated copula.

\subsection{Enriching Feasible Sets with Asynchronous Experiments}\label{subsec:trommels}

\begin{figure}[htbp]
\centering
{\label{fig:trommelA}\includegraphics[width=0.4\textwidth]{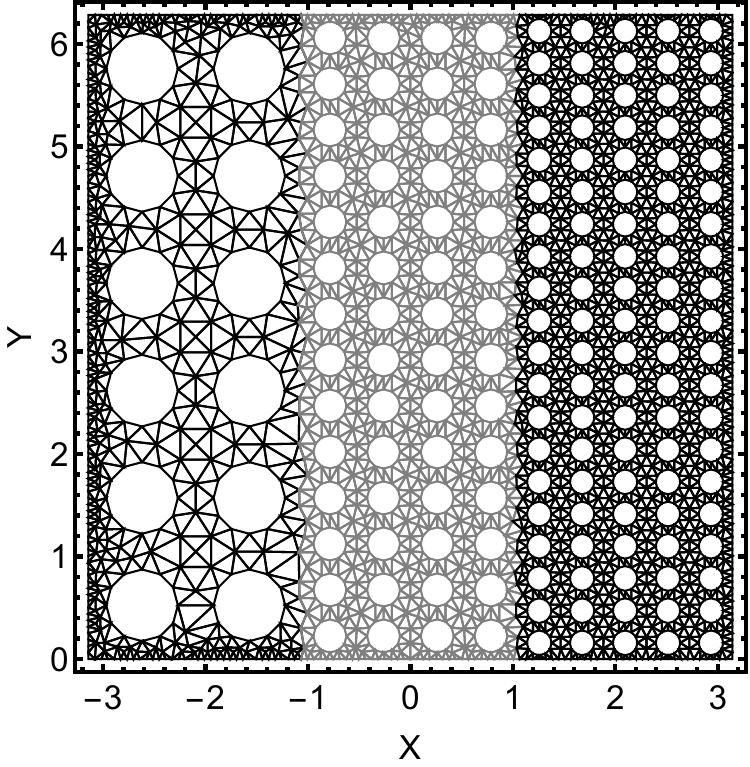}}
\quad
{\label{fig:trommelB}\includegraphics[width=0.4\textwidth]{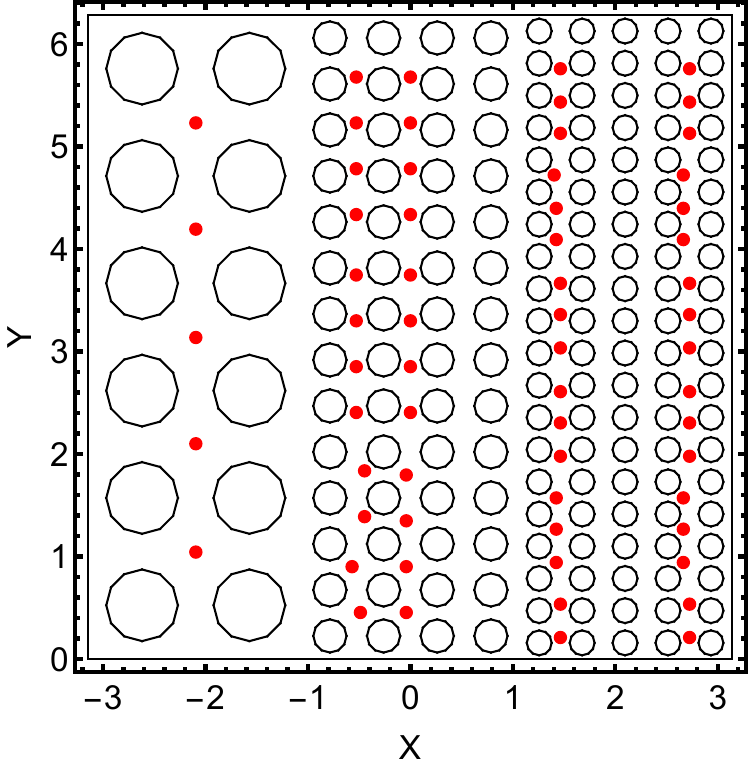}}
\caption{Trommel configuration, hole coverage 35\
The boundaries at $x = \pm \pi$ are clamped and $y=0$ and $y=2\pi$
are periodic.
Figure is adopted from \cite{RHB25}.
}\label{fig:trommel}
\end{figure}

We adapt an example from~\cite{RHB25} involving a computational mechanics model of a trommel screen (whose finite element mesh is illustrated in the left plot of Figure~\ref{fig:trommel}) to investigate a practical scenario in which information is accumulated through a sequence of independent experiments.
Due to their operational use in separating materials of distinct sizes in mineral and solid-waste applications, the structural health of a population of screens (defined by the distribution of Young's modulus) should be periodically analyzed.
We simulate three separate experiments involving the separation of large, moderate, and small debris for which displacement data are recorded (with precision limited to two decimal places) at the 63 sensor locations shown in the right plot of Figure~\ref{fig:trommel}.
At a high-level, each experiment provides distinct observational data from which separate data-derived QoI maps are learned through the LUQ (Learning Uncertain Quantities) framework presented in \cite{RHB25}.
This results in push-forward constraints defined on observable spaces of different dimensions.

Since such experiments are performed asynchronously in practice, we sequentially incorporate their push-forward constraints, thereby progressively enriching the feasible set associated with the GSIP.
Theorem~\ref{thm:mtd2_converges} and Corollary~\ref{cor:feasible_sets} motivate this progressive refinement strategy by establishing convergence under their stated assumptions.

The computational model, finite-element discretization, displacement data, and learned QoI maps are adapted directly from~\cite{RHB25}, allowing the present example to isolate the impact of the inverse methodology.
The principal modifications are summarized in Table~\ref{tab:trommel_comparison}.
In particular, we introduce a correlated data-generating distribution, formulate the inverse problem as a progressively enriched GSIP, and recursively update the reference measure between refinement stages.
These changes allow the example to simultaneously illustrate the flexibility of Algorithm~\ref{alg:main} in incorporating heterogeneous QoI constraints and the progressive refinement strategy established by Corollary~\ref{cor:feasible_sets}.

\begin{table}[ht]
\centering
\caption{Comparison of the key component differences utilized in the present work from the inverse methodologies employed in the LUQ trommel-screen study of~\cite{RHB25} so that the influence of the modified inverse methodology can be isolated.}
\label{tab:trommel_comparison}

\begin{tabular}{p{4.0cm}p{5.0cm}p{5.6cm}}
\toprule
\textbf{Key Component} &
\textbf{LUQ Study~\cite{RHB25}} &
\textbf{Present Work} \\
\midrule

DG distribution &
Independent with
$\mathrm{Beta}(2,6)$,
$\mathrm{Beta}(3,4)$,
$\mathrm{Beta}(2,4)$
marginals &
Gaussian copula with $\mathrm{Beta}(4,6)$,
$\mathrm{Beta}(8,3)$,
$\mathrm{Beta}(3,9)$
marginals \\
\hline

Solution strategy &
Separate SIPs solved by three DCI updates &
Progressively enriched GSIP solved by iDCI \\
\hline

{Utilization of accumulated experimental information} &
{Each DCI update uses only the current experiment; previously enforced push-forward constraints are discarded} &
{Each refinement stage retains all previously imposed push-forward constraints to define a progressively enriched GSIP} \\
\bottomrule
\end{tabular}
\end{table}

In this work, the Young's modulus values in the three separately shaded sections of the trommel screen shown in the left plot of Figure~\ref{fig:trommel} are labeled, left-to-right, as $\lambda_1$, $\lambda_2$, and $\lambda_3$.
As in \cite{RHB25}, it is further assumed that the parameter space is defined as
\[
    \pspace := [0.9,1.1]\times[0.72,0.88]\times[0.576, 0.705]
\]
with a uniform initial distribution.
The marginal Beta distributions described in Table~\ref{tab:trommel_comparison} are appropriately shifted and scaled as necessary for each parameter dimension.
As in \cite{RHB25}, we utilize a total of 500 initial samples and 50 data-generating samples.

We assume the large, medium, and small debris experiments are conducted in that order and contribute learned QoI spaces of dimensions 2, 3, and 1, respectively.
Thus, $\p_1$ is defined, equivalently, either by the single estimated observed density on the two-dimensional QoI space learned from the large debris experiment or by the two observed marginal densities along with their copula.
We opt for the former so that the solution is given by a standard application of DCI.
Then, we estimate the associated joint observed density on the next three-dimensional space to define the second constraint for the enriched feasible set $\p_2$.
Since we do not assume that any cross-experiment copula information is known between the separately conducted debris experiments, the solution for $\p_2$ is constructed by applying the iDCI approach of Algorithm~\ref{alg:main} to the two- and three-dimensional QoI spaces learned from the separate experiments using the solution obtained for $\p_1$ as the initial reference measure.
The observed density for the final one-dimensional QoI is then estimated to construct the third constraint used in the final enriched feasible set $\p_3$.
The final solution is obtained by again applying the iDCI method to $\p_3$ starting from the solution to $\p_2$.
Results are summarized with 2D marginal plots for each feasible set in Figure~\ref{fig:trommel_results}, which illustrate that progressive enrichment of the feasible sets produces solutions that increasingly improve in their ability to recover the dependence structure within the data-generating distribution.

\begin{figure}
    \centering
    \includegraphics[width=0.9\linewidth]{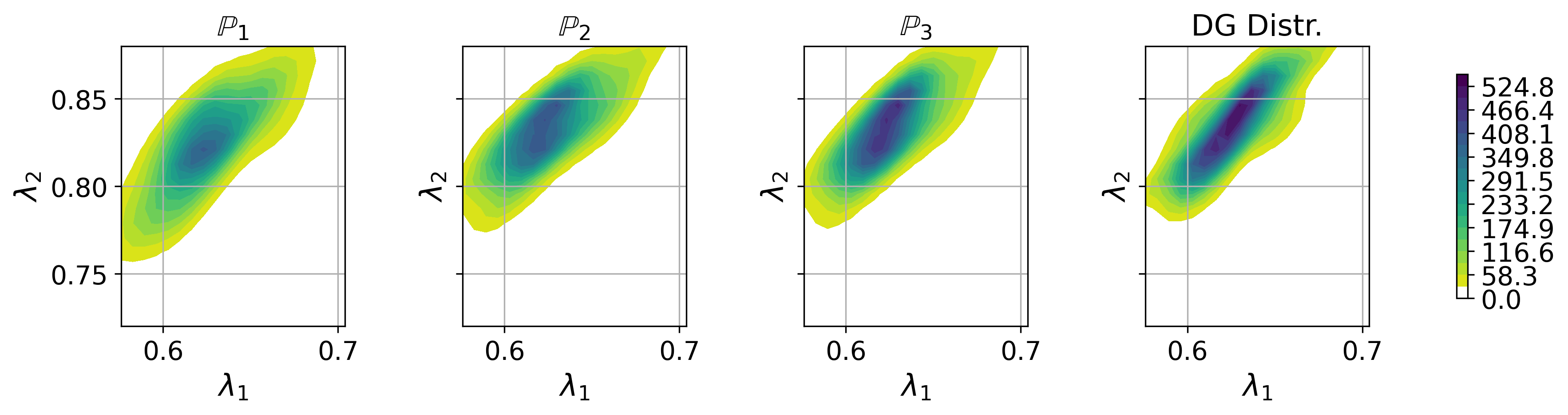}\\
    \includegraphics[width=0.9\linewidth]{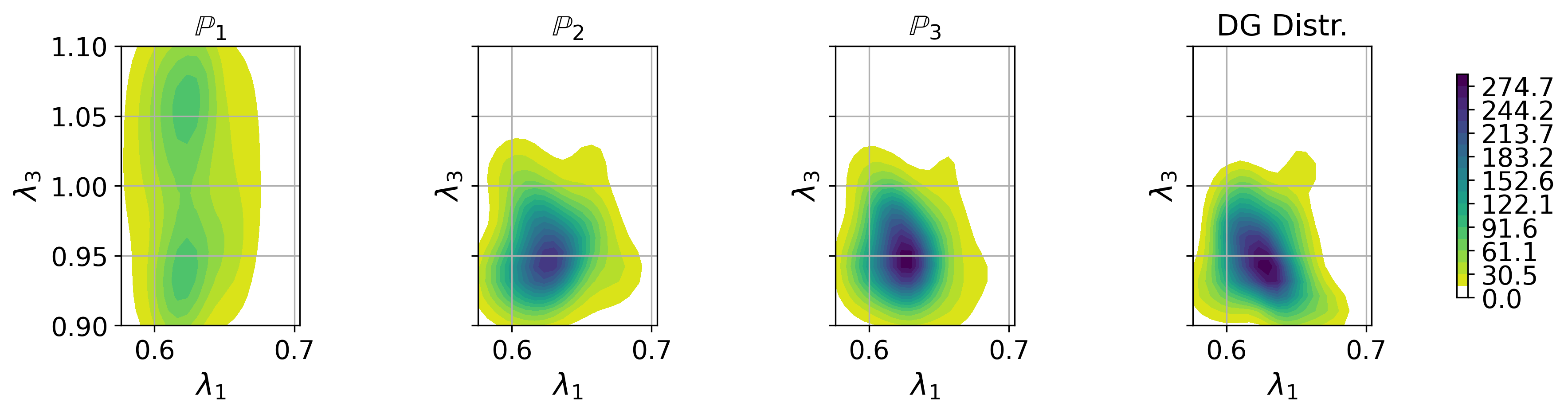}\\
    \includegraphics[width=0.9\linewidth]{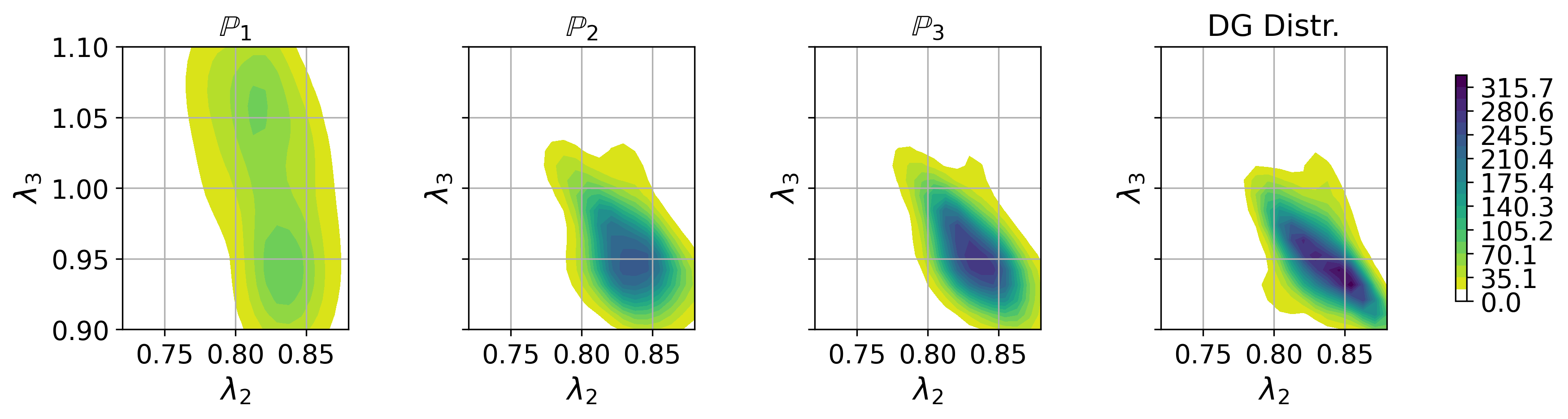}
    \caption{Top, middle, and bottom rows show, respectively, the $(\lambda_1,\lambda_2)$, $(\lambda_1, \lambda_3)$, and $(\lambda_2, \lambda_3)$ marginal plots of the progressively enriched GSIP solutions for the trommel screen example.
    The leftmost column shows the marginals associated with the $\p_1$ feasible set defined by the two-dimensional push-forward constraint associated with the large debris experiment.
    The second-to-left column is for $\p_2$ defined by enriching the $\p_1$ feasible set with the three-dimensional push-forward constraint associated with the medium debris experiment.
    The second-to-right column is for $\p_3$  defined by enriching the $\p_2$ feasible set with the one-dimensional push-forward constraint associated with the small debris experiment.
    The sample data-generating distribution marginals are shown in the rightmost column.}
    \label{fig:trommel_results}
\end{figure}

The quantitative results in Table~\ref{tab:trommel_kl} summarize the progressive refinement procedure.
The column, \emph{Accumulated QoI}, records the cumulative dimensions of the learned QoI spaces incorporated up to each stage that are used to enrich the associated feasible set.
The final row reports the direct iDCI solution on the fully enriched feasible set $\p_3$, computed from the original uniform initial distribution, for direct comparison with the progressively refined solution.

\begin{table}[ht]
\centering
\caption{KL divergences from the sample data-generating (DG) distribution for the trommel-screen example.
The column \emph{Accumulated QoI} records the cumulative dimensions of the learned QoI spaces incorporated up to each refinement stage (2, then 2+3, then 2+3+1).
The final row reports the direct iDCI solution on the fully enriched feasible set $\p_3$, computed from the original uniform initial distribution, for comparison with the progressively refined solution.}
\label{tab:trommel_kl}
\small
\begin{tabular}{p{4.4cm}p{3.5cm}p{3cm}p{2.6cm}}
\toprule
\textbf{Stage / Feasible Set} & \textbf{Reference Measure} & \textbf{Accumulated QoI} & \textbf{KL Divergence} \\
\midrule
Initial distribution (KDE) & Uniform initial & -- & $\expnumber{2.9185}{0}$ \\
Stage 1: $\p_1$ (DCI) & Uniform initial & 2 & $\expnumber{1.3355}{0}$ \\
Stage 2: $\p_2$ (iDCI) & Solution on $\p_1$ & 2+3 & $\expnumber{3.452}{-1}$ \\
Stage 3: $\p_3$ (iDCI) & Solution on $\p_2$ & 2+3+1 & $\expnumber{3.003}{-1}$ \\
Direct iDCI on $\p_3$ & Uniform initial & 2+3+1 & $\expnumber{3.0083}{-1}$ \\
\bottomrule
\end{tabular}
\end{table}

The table shows that progressively enriching the feasible set rapidly reduces the KL divergence from the sample DG distribution, decreasing from $\expnumber{2.9185}{0}$ for the initial estimate to $\expnumber{3.003}{-1}$ after the third refinement stage.
Moreover, the direct iDCI solution on the fully enriched feasible set $\p_3$ is practically indistinguishable from the progressively refined solution.
Computing the KL divergence of this direct iDCI solution from the progressively refined solution produces a value of approximately $\expnumber{3.891}{-4}$.
This is consistent with the behavior predicted by Corollary~\ref{cor:feasible_sets} as retaining the previously learned push-forward constraints while recursively updating the reference measure is expected to yield the same limiting solution as solving the fully enriched GSIP directly.

To contextualize these values, we also estimated the data-generating distribution directly from fifty independent sets of $50$ samples.
The KL divergences between the sample data-generating distribution used and these separate kernel density estimates had an average of approximately $\expnumber{5.395}{-1}$, with a minimum of $\expnumber{2.147}{-1}$ and a maximum of $\expnumber{9.665}{-1}$.
Thus, the progressively refined solution and the direct iDCI solution both lie well within the variability expected from direct estimates of the data-generating distribution with this sample size.

\section{Conclusions and Future Work}\label{sec:conclusions}

This work establishes a theoretical and computational bridge between the DCI and iDCI frameworks through copula theory.
By applying Sklar's theorem to the observed and predicted push-forward distributions, we derived a factorization of the DCI update into separate marginal and dependence transformations and showed that the discrepancy remaining after convergence of the iDCI algorithm is entirely characterized by the corresponding copulas.
This characterization motivated a copula-transformed iDCI solution, for which we proved that an exact copula transformation recovers the original DCI solution.
We further developed convergence theory for approximate copula transformations under converging reference measures and progressively enriched feasible sets, providing theoretical support for adaptive computational strategies.
The numerical examples demonstrated that the importance of the copula transformation is governed primarily by the geometry induced by the QoI map, that adaptive reference-measure refinement substantially improves approximation quality while maintaining a fixed per-stage computational budget, and that progressively enriched GSIPs naturally incorporate heterogeneous, asynchronously acquired experiments.

Future work will investigate both theoretical and computational extensions of the framework developed here.
A particularly promising direction is to better understand how the geometry induced by the QoI map governs the magnitude of the dependence transformation and how this relationship may be exploited for optimal experimental design (OED), sensor placement, and the construction of learned QoI maps that naturally minimize the need for copula transformations.
Such developments would complement prior DCI-based OED analyses (e.g.~\cite{WWJ17, BJP+26}) and would enable experimental campaigns in which heterogeneous or asynchronously acquired observations can be incorporated with minimal loss of information despite the absence of paired measurements.
On the computational side, a systematic investigation of copula estimation strategies, including nonparametric estimators, vine copulas, and normalizing-flow-based copulas, is needed to better understand the trade-offs between model complexity, computational cost, and approximation accuracy.
Finally, the progressive refinement framework developed here naturally suggests broader adaptive methodologies for GSIPs in which increasingly informative observable spaces and additional probabilistic constraints are incorporated sequentially as new experimental information becomes available.

\section{Data Availability and Supplementary Material}\label{sec:supplementary}

The public repo \verb|https://github.com/CU-Denver-UQ/Iterative-DCI| contains the code and datasets utilized to generate the results for the numerical examples as well as a summary of package dependencies required to execute the code provided.
Executing the Jupyter notebooks associated with the first two examples will both generate all the data as well as the figures shown in Section~\ref{subsec:copulas_linear_example} and \ref{subsec:adaptive_measures}.
The simulation data for the trommel screen example shown in Section~\ref{subsec:trommels} are also provided within the repo and are utilized when executing the associated Jupyter notebook to generate the results reported within this manuscript.

\section{Acknowledgments}\label{sec:acknowledgments}
T.~Butler and J.~Silva's work is supported by the National Science Foundation under Grant No.~DMS-2208460.
However, any opinion, finding, and conclusions or recommendations expressed in this material are those of the author and do not necessarily reflect the views of the National Science Foundation.

H.~Hakula is supported by the Research Council of Finland (Flagship of Advanced Mathematics for Sensing Imaging and Modelling grant 359181).

This paper describes objective technical results and analysis. Any subjective views or opinions that might be expressed in the paper do not necessarily represent the views of the U.S. Department of Energy or the United States Government.
Sandia National Laboratories is a multimission laboratory managed and operated by National Technology and
Engineering Solutions of Sandia, LLC., a wholly owned subsidiary of Honeywell International, Inc.,
for the U.S. Department of Energy’s National Nuclear Security Administration under contract DE-NA-0003525.  This material is based upon work supported by the U.S. Department of Energy, Office of Science, Office of Advanced Scientific Computing Research, under contract 25-025287.

The authors thank Professor Haonan Wang for the valuable discussions.

\section{Generative AI Disclosure}

During the preparation of this work, the authors used a University of Colorado Denver licensed version of ChatGPT to assist with editing, proofreading, and improving the clarity and presentation of the manuscript. ChatGPT was also used to generate initial drafts of some helper functions involving KL divergence calculations and Gaussian copula estimation in the code used to produce the numerical results; these functions were subsequently reviewed and modified by the authors to meet the specific requirements of the surrounding code. In all cases, the authors reviewed and edited the AI-assisted content as needed and take full responsibility for the content of the publication and accompanying code.

\bibliographystyle{IJ4UQ_Bibliography_Style}
\bibliography{references}
\end{document}